\theoremstyle{plain}
\newtheorem{theorem}{Theorem}[section]
\newtheorem{lemma}[theorem]{Lemma}
\theoremstyle{definition}
\newtheorem{definition}[theorem]{Definition}
\def\eqref#1{equation~\ref{#1}}
\def\1{\bm{1}}
\DeclareMathAlphabet{\mathsfit}{\encodingdefault}{\sfdefault}{m}{sl}
\SetMathAlphabet{\mathsfit}{bold}{\encodingdefault}{\sfdefault}{bx}{n}
\newcommand{\E}{\mathbb{E}}
\DeclareMathOperator*{\argmin}{arg\,min}
\newcommand\EnumPrefix{}
\newlist{senenum}{enumerate}{10}
\setlist[senenum]{label=\arabic*.,ref=\EnumPrefix,leftmargin=*}
\newtheorem{lemmalist}[theorem]{Lemma}
\newcommand{\mcalD}{\mathcal{D}}
\newcommand{\luo}{\color{black}}
\numberwithin{equation}{section}
\icmltitlerunning{Uniformly Stable Algorithms for Adversarial Training and Beyond}
\begin{document}

\twocolumn[
\icmltitle{Uniformly Stable Algorithms for Adversarial Training and Beyond}
\icmlsetsymbol{equal}{*}

\begin{icmlauthorlist}
\icmlauthor{Jiancong Xiao}{equal,upenn}
\icmlauthor{Jiawei Zhang}{equal,mit}
\icmlauthor{Zhi-Quan Luo}{cuhksz}
\icmlauthor{Asuman Ozdaglar}{mit}
\end{icmlauthorlist}

\icmlaffiliation{upenn}{University of Pennsylvania, PA, USA;}
\icmlaffiliation{mit}{Massachusetts Institute of Technology, MA, USA;}
\icmlaffiliation{cuhksz}{The Chinese University of Hong Kong, Shenzhen, China}

\icmlcorrespondingauthor{Jiancong Xiao}{jcxiao@upenn.edu}
\icmlcorrespondingauthor{Jiawei Zhang}{jwzhang@mit.edu}

\icmlkeywords{Machine Learning, ICML}

\vskip 0.3in
]
\printAffiliationsAndNotice{\icmlEqualContribution}

\begin{abstract}
In adversarial machine learning, neural networks suffer from a significant issue known as robust overfitting, where the robust test accuracy decreases over epochs \citep{rice2020overfitting}. Recent research conducted by \citet{xing2021on,xiao2022stability} has focused on studying the uniform stability of adversarial training. Their investigations revealed that SGD-based adversarial training fails to exhibit uniform stability, and the derived stability bounds align with the observed phenomenon of robust overfitting in experiments. This motivates us to develop uniformly stable algorithms specifically tailored for adversarial training. To this aim, we introduce Moreau envelope-$\mathcal{A}$, a variant of the Moreau Envelope-type algorithm. We employ a Moreau envelope function to reframe the original problem as a min-min problem, separating the non-strong convexity and non-smoothness of the adversarial loss. Then, this approach alternates between solving the inner and outer minimization problems to achieve uniform stability without incurring additional computational overhead. In practical scenarios, we show the efficacy of ME-$\mathcal{A}$ in mitigating the issue of robust overfitting. Beyond its application in adversarial training, this represents a fundamental result in uniform stability analysis, as ME-$\mathcal{A}$ is the first algorithm to exhibit uniform stability for weakly-convex, non-smooth problems. 
\end{abstract}

\section{Introduction}
One of the interesting ability of deep neural networks (DNNs) \citep{krizhevsky2012imagenet} is that they rarely suffered from overfitting issues \citep{zhang2021understanding}. However, in the setting of adversarial training, this ability disappears, and overfitting becomes one of the most critical issues. Specifically, in a regular setting of SGD-based adversarial training shown in Figure \ref{fig:intro}, the robust test accuracy (orange line) starts to decrease after a particular epoch, while the robust training accuracy (blue line) continues to increase. This phenomenon is referred to as robust overfitting \citep{rice2020overfitting}. It can be observed in experiments on common datasets such as SVHN, CIFAR-10/100.
\begin{figure}[htbp]
	\vspace{-0.1in}
	\centering
	\includegraphics[width=0.8\linewidth]{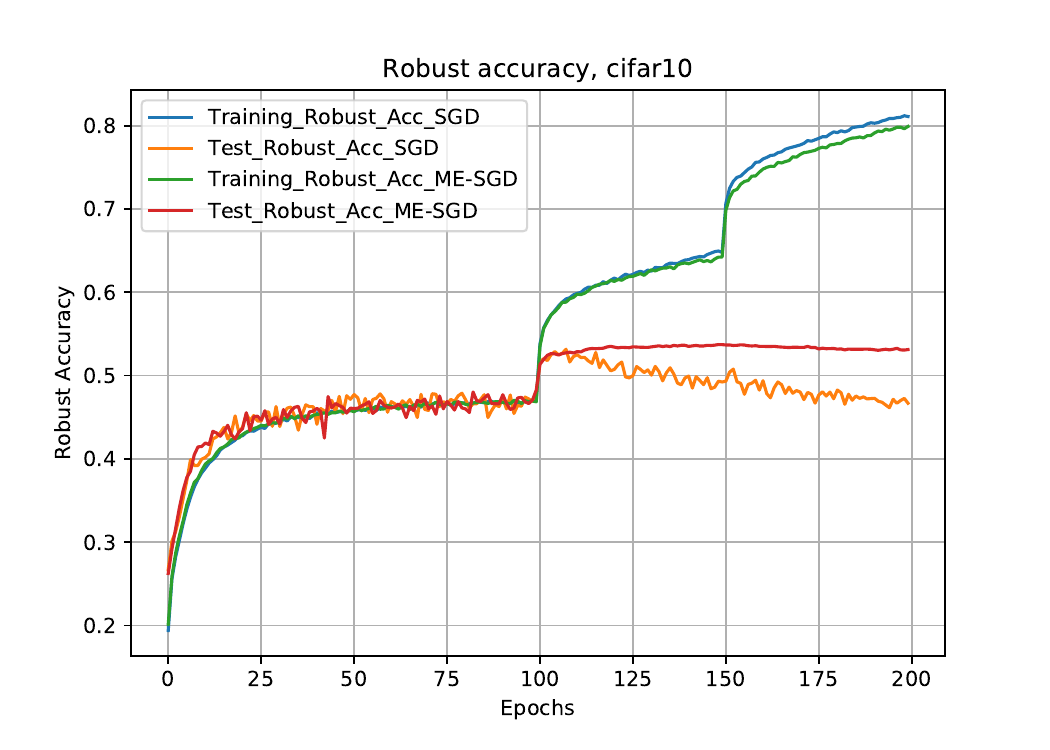}
	\caption{\small{Demonstration of robust overfitting (Orange Line) and mitigating robust overfitting by Moreau envelope-$\mathcal{A}$ (Red Line).}}
	\label{fig:intro}
\end{figure}

\begin{table*}[htbp]
	\centering
 \caption{Comparison of uniform stability bounds of adversarial training. ME-$\mathcal{A}$ reduces the additional term in $\mathcal{O}(T^q\epsilon)$, which is a possible reason for robust overfitting. It also mitigates the robust overfitting issue in practice.}
\begin{tabular}{cccccc}
		\toprule
	& Assumption (Std-loss)&Training & Algorithm & Stability Bound  & No Overfitting\\
	\midrule
 	\citet{hardt2016train} &Nonconvex &Standard Training&SGD & $\mathcal{O}(T^q/n)$ &\Checkmark\\
	\citet{xiao2022stability}&Nonconvex  &Adversarial Training&SGD & $\mathcal{O}(T^q\epsilon+T^q/n)$ &\XSolidBrush\\
	\textbf{Ours}&Nonconvex &Adversarial Training & \textbf{ME-$\mathcal{A}$} & $\mathcal{O}(T^q/n)$  & \Checkmark\\
	\bottomrule
\end{tabular}\label{table:intro}
\end{table*}

Recent research has utilized uniform stability, a generalization measure in learning theory, to investigate this phenomenon \citep{xing2021on,xiao2022stability}. They have suggested that the non-smoothness of the adversarial loss may contribute to the issue of robust overfitting. Informally, uniform stability is the gap between the output parameters $w$ of running an algorithm on two datasets $S$ and $S'$ differ in at most one sample, denoted as $\|w(S)-w(S')\|$.  In uniform stability analysis, assuming the standard training loss is non-convex and smooth, a well-known result given by \cite{hardt2016train} is that applying stochastic gradient descent (SGD) to the standard loss yields uniform stability in $\mathcal{O}(T^q/n)$, where $T$ represents the number of iterations, $n$ is the number of samples and $0<q<1$. However, the adversarial loss is non-smooth, even if we assume the standard loss is smooth. Consequently, the uniform stability bounds include an additional term in $\mathcal{O}(T^q\epsilon)$ \cite{xiao2022stability}, where $\epsilon$ is the attack intensity. The bound suggests that the robust test error increases as $T$ grows, even when we have an infinite number of training samples (\emph{n $\rightarrow$ $\infty$}). Therefore, the derived uniform stability bounds align with the observations made in practical adversarial training.

This observation motivates us to develop uniformly stable algorithms for adversarial training in order to mitigate the issue of robust overfitting. Additionally, we define the adversarial loss as $h(w;z)=\max_{\|z-z'\|\leq\epsilon}g(w;z)$, where $z$ represents the sample, $w$ is the training parameter, and $g$ denotes the corresponding standard training loss. It has been proven that the additional term arises from the non-smoothness caused by the $\max$ operation in the adversarial loss \cite{xiao2022stability}. Consequently, our objective is to design uniformly stable algorithms specifically tailored for non-smooth optimization problems and then apply them to the adversarial training problem.

However, attaining uniform stability with non-smooth loss functions presents significant challenges. Common strategies in non-smooth optimization involve smoothing the loss function (\emph{e.g.,} Moreau-Yosida smoothing (MYS) \citep{nesterov2005smooth}). This approach is notably computationally inefficient when aiming for uniform stability, as proved by \citet{bassily2019private}. In deep learning, this drawback is further magnified and such methods are computationally intractable. Additionally, a notable study by \citet{bassily2020stability} pointed out that stochastic gradient descent (SGD), a widely used optimization technique, does not ensure uniform stability for convex non-smooth problems.

Our approach involves utilizing the Moreau envelope function \citep{moreau1965proximite}, which is a classical tool employed in methods such as the proximal point method (PPM) and MYS, in a different way. Our approach is to separate the non-strong convexity and non-smoothness of the original convex non-smooth problem. Given training set $S$, the original problem (P.1) is then reformulated into the min-min problem (P.2).
\begin{equation}
\begin{aligned}
   & \min_w \mathbb{E}_S h(w;z)\quad(\text{P.1})\\
   \Leftrightarrow \  &\min_u\min_w \mathbb{E}_S \big[h(w;z)+\frac{p}{2}\|w-u\|^2\big]\quad(\text{P.2})
\end{aligned}
\end{equation}
Firstly, it is important to note that P.2 is equivalent to P.1 in terms of global solutions. Secondly, the inner problem exhibits strong convexity and non-smoothness, while the outer problem is convex and smooth. Both of the two problems have at least one advantage for algorithm design. 

Moreover, this analysis extends beyond convex problems to include non-convex ones as well. Building on the assumptions in \citet{hardt2016train,xiao2022stability} that the standard loss function $g(w;z)$ is non-convex and smooth, although the adversarial loss is both non-convex and non-smooth, it can still be demonstrated to be weakly convex. Therefore, ME-$\mathcal{A}$ can be applied in this setting.

The main results of this work are in three aspects.

\begin{senenum}
\item \textbf{Algorithms for Adversarial Training.} Let $\mathcal{A}$ be a first-order algorithm used to solve the original problem (P.1), such as stochastic gradient descent (SGD) or batch gradient descent (BGD). We introduce Moreau envelope-$\mathcal{A}$ (ME-$\mathcal{A}$), which alternates the application of $\mathcal{A}$ to the inner problem and GD to the outer problem of P.2. We prove that ME-$\mathcal{A}$ achieves uniform stability for both the inner and outer problems, thereby achieving uniform stability for the entire problem without incurring additional computational overhead. The comparison of SGD and ME-$\mathcal{A}$ is provided in Table \ref{table:intro}. ME-$\mathcal{A}$ improves over SGD in terms of uniform stability by reducing the term $\mathcal{O}(T^q\epsilon)$. In Figure \ref{fig:intro} (red line), we demonstrate that ME-$\mathcal{A}$ effectively mitigates robust overfitting in practical scenarios.

\item \textbf{Understanding Adversarial Training.} In the previous studies of adversarial training, robust overfitting and sample complexity are usually considered to be related: DNNs tend to overfit adversarial examples, necessitating more samples to circumvent this issue \citep{schmidt2018adversarially,rice2020overfitting}. This paper provides us with further insights that robust generalization can be decomposed additively by robust overfitting and sample complexity, i.e., 
\begin{equation*}
\begin{aligned}
    &\text{Robust Generalization}\\\leq& \underbrace{\text{Robust Overfitting}}_{\text{red line - orange line}\ \approx\ \mathcal{O}(T^q\epsilon)} +\underbrace{\text{Sample Complexity}}_{\text{blue line - red line}\ \approx\  \mathcal{O}(T^q/n)}.
\end{aligned}
\end{equation*}
\emph{Robust Overfitting.} By employing ME-$\mathcal{A}$, the robust overfitting issue (in $\mathcal{O}(T^q\epsilon)$) is mitigated. DNNs fit the adversarial examples well, yet achieving the performance ceiling (red line in Figure \ref{fig:intro}), within the constraints of the existing dataset size $n$. A widely used algorithm, stochastic weight averaging, plays a similar role as ME-$\mathcal{A}$ in adversarial training.

\emph{Sample Complexity.} Considering the performance ceiling established at $\mathcal{O}(T^q/n)$, an increase in data volume is essential for enhancing this upper limit. Recent studies, such as those by \citet{carmon2019unlabeled} on pseudo-labeled data and \citet{rebuffi2021fixing} on generated data, have demonstrated their effectiveness in improving robust generalization. These findings lend support to our theoretical framework.

\item \textbf{Beyond Adversarial Training.} While our primary emphasis lies in adversarial training, we present a fundamental result in uniform stability analysis. ME-$\mathcal{A}$ is the first uniformly stable algorithm for weakly-convex, non-smooth problems, which is not accomplished by existing algorithms such as PPM and MYS. A comprehensive comparison is given in Sec. \ref{further}.
\end{senenum}

\section{Related Work} 
\label{rel}

\paragraph{Uniform Stability.} The concept of stability can be traced back to the work of \citep{rogers1978finite}. In the context of statistical learning problems, it has been well developed through the analysis of algorithm-based generalization bounds \citep{bousquet2002stability}.
\vspace{-0.1in}
\paragraph{Smooth Cases.} In smooth settings, \citet{hardt2016train} established strong bounds of stability. They demonstrated that several variants of stochastic gradient descent (SGD) can simultaneously achieve uniform stability bounds in $\mathcal{O}(L^2T\alpha/n)$ in convex settings and $\mathcal{O}(L^2T^q/n)$ in non-convex settings, where $\alpha$ is the step size. This approach has been used in several studies to derive new generalization properties of SGD \citep{feldman2018generalization,feldman2019high}. The work of \citep{chen2018stability} investigated the optimal trade-off between stability and convergence.
\vspace{-0.1in}
\paragraph{Proximal Methods.} ME-$\mathcal{A}$ looks similar to proximal point method (PPM) since they both use the Moreau envelope function. In smooth case, \citet{yuan2023sharper} have provided a thorough analysis of uniform stability for PPM. Additionally, \citet{hardt2016train} demonstrated that incorporating a proximal step following SGD steps does not degrade the uniform stability bound. In Sec. \ref{further}, we provide a more detailed discuss about the difference between ME-$\mathcal{A}$ and PPM.
\vspace{-0.1in}
\paragraph{Non-Smooth Cases.} In adversarial training, \citet{liu2020loss} proved that non-smoothness is an important issue, leading to bad robust accuracy. \citet{bassily2020stability} investigated the stability of several variants of stochastic gradient descent (SGD) on non-smooth loss functions. They demonstrated that the generalization bound contains an additional terms. Subsequent studies showed that some variants of SGD, such as pairwise-SGD \citep{yang2021stability} and Markov chain-SGD \citep{wang2022stability}, also possess this term. We list these work in Appendix C. \citet{kanai2023relationship} introduces the use of EntropySGD, a technique that applies SGLD to a smooth surrogate loss to enhance uniform stability. The study of \citep{lei2022stability} also investigated the topic of stability in non-convex non-smooth problems. They introduced a novel stability measure known as stability in gradient, which assesses the stability of non-convex problems. Remarkably, they also employed the Moreau envelope function, but for the definition of stability in gradient for non-differentiable function. 

\paragraph{Uniform Convergence Analysis.} Besides algorithmic generalization analysis, uniform convergence represents a different approach to generalization analysis in traditional learning theory. It offers generalization bounds for the function class  with high probability, which is algorithm-independent. Uniform convergence analysis includes VC-dimension, Rademacher complexity, and Pac-Bayes analysis. Research by \citet{cullina2018pac,attias2022improved,attias2022characterization} had established adversarial generalization bounds utilizing VC-dimension. For example, in finite adversarial examples cases, \citet{attias2022improved} provided the sample complexity of generalization gap respect to VC($\mathcal{H}$). Following that, \citet{montasser2019vc} have proved that VC classes are robustly PAC-learnable only improperly, with respect to any arbitrary perturbation set, possibly of infinite size. Their approach relies on sample compression arguments whereas uniform convergence does not hold. Regarding Rademacher complexity, robust generalization can be bounded by adversarial Rademacher complexity \citep{khim2018adversarial,yin2019rademacher} for linear classifier. It is extended to two-layers neural networks \citep{awasthi2020adversarial}, FGSM attacks \citep{gao2021theoretical}, and deep neural networks\citep{xiao2022adversarial,mustafa2022generalization}. Pac-bayes analysis is another approach to provide norm based control for generalization. \citet{farnia2018generalizable} and \citet{xiao2023pac} Pac-bayesian bound for adversarial generalization. Since these bounds are algorithm-independent, they cannot distinguish the generalization performance of different algorithms.
\section{Preliminaries of Stability Analysis} \label{updates:sec}

Let ${\cal D}$ be an unknown distribution in the sample
space $\mathcal{Z}$. 
Our goal is to find a model $w$ with small population risk, defined as:
\[
R_{\cal D}(w) = \E_{z\sim{\cal D}} h(w,z),
\]
where $h(\cdot,\cdot)$ is the loss function which is possibly nonsmooth. 
Since we cannot get access to the objective $R_{\mcalD}(w)$ directly due to the unknown distribution $\mcalD$, we instead minimize the empirical risk built on a training dataset. 
Let $S=\{z_1,\dots,z_n\}\sim {\cal D} ^n$ be an sample dataset drawn
i.i.d. from ${\cal D}.$
The empirical risk function is defined as:
\[
R_S(w)=\frac1n\sum_{i=1}^n h(w,z_i).
\]
 Let $\bar w$ be the optimal solution of $R_S(w)$. Then, for the algorithm output $\hat w=A(S)$, we define the expected generalization gap as 
\begin{equation}\label{gengap}
\begin{aligned}
\mathcal{E}_{gen} =\mathbb{E}_{S\sim \mathcal{D}^n,A} [R_{\cal D}(A(S))- R_{S}(A(S))].
    \end{aligned}
\end{equation}
 We define the the expected optimization gap as 
 \begin{equation}\label{optgap}
\begin{aligned}
\mathcal{E}_{opt} =\mathbb{E}_{S\sim \mathcal{D}^n,A} [R_{S}(A(S))- R_{S}(\bar w)].
    \end{aligned}
\end{equation}
To bound the generalization gap of a model $\hat w=A(S)$ trained by a randomized algorithm $A$, we employ the
following notion of \emph{uniform stability}.

\begin{definition}
A randomized algorithm $A$ is $\varepsilon$-\emph{uniformly stable} if
for all data sets $S,S'\in \mathcal{Z}^n$ such that $S$ and $S'$ differ in at most one
example, we have
\begin{equation}\label{eq:stab}
\sup_{z} \E_{A} \left[ h(A(S); z) - h(A(S'); z) \right] \le \varepsilon\,.
\end{equation}
\end{definition}
The following theorem shows that expected generalization gap can be attained from uniform stability.

\begin{theorem}[Generalization in expectation \citep{hardt2016train}]
\label{thm:stab2gen}
Let $A$ be $\varepsilon$-uniformly stable. Then, the expected generalization gap satisfies
\[
|\mathcal{E}_{gen}|=\left| \E_{S,A}[R_{\cal D}[A(S)] - R_S[A(S)]]\right| \le \varepsilon\,.
\]
\end{theorem}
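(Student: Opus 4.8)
The plan is to use the classical ghost-sample (renaming) argument. First I would introduce an independent ``ghost'' sample $S'=(z_1',\dots,z_n')\sim\mathcal{D}^n$, drawn independently of $S$ and of the internal randomness of $A$. The goal is to rewrite the population risk $\E_{S,A}[R_{\mathcal D}(A(S))]$ in a form that can be compared term by term with the empirical risk $\E_{S,A}[R_S(A(S))]$. Since each $z_i'$ is a fresh draw from $\mathcal{D}$ that is independent of $S$, we have $\E_{z\sim\mathcal{D}}[h(A(S);z)]=\E_{S'}[h(A(S);z_i')]$ for every fixed index $i$; averaging over $i$ gives
\[
\E_{S,A}[R_{\mathcal D}(A(S))]=\E_{S,S',A}\Big[\frac1n\sum_{i=1}^n h(A(S);z_i')\Big].
\]
Subtracting $\E_{S,A}[R_S(A(S))]=\E_{S,A}\big[\frac1n\sum_i h(A(S);z_i)\big]$ then expresses the generalization gap as $\E_{S,S',A}\big[\frac1n\sum_i(h(A(S);z_i')-h(A(S);z_i))\big]$.

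The key step is the relabeling trick. Let $S^{(i)}$ denote $S$ with its $i$-th coordinate replaced by $z_i'$. Because $z_i$ and $z_i'$ are i.i.d.\ and independent of the remaining coordinates, swapping their names leaves the joint law invariant, so $\E_{S,S',A}[h(A(S);z_i')]=\E_{S,S',A}[h(A(S^{(i)});z_i)]$. Substituting this identity transforms the gap into $\E_{S,S',A}\big[\frac1n\sum_i(h(A(S^{(i)});z_i)-h(A(S);z_i))\big]$, in which $S$ and $S^{(i)}$ now differ in exactly one example.

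I would then invoke uniform stability directly. Conditioning on $(S,S')$ fixes the datasets $S,S^{(i)}$ and the evaluation point $z_i$; since $S$ and $S^{(i)}$ differ in one example, the definition of $\varepsilon$-uniform stability (applied at the particular test point $z=z_i$ inside the supremum) yields $\E_A[h(A(S^{(i)});z_i)-h(A(S);z_i)]\le\varepsilon$. Averaging over $i$ and over $(S,S')$ bounds the gap above by $\varepsilon$. Running the same argument with the roles of $S$ and $S^{(i)}$ exchanged gives the matching lower bound $-\varepsilon$, and together these establish $|\mathcal{E}_{gen}|\le\varepsilon$. The one place that requires care — and the main conceptual obstacle — is justifying the relabeling identity: one must verify that renaming $z_i\leftrightarrow z_i'$ genuinely preserves the joint distribution of all the variables appearing inside the expectation, and that uniform stability (a statement about the randomness of $A$ for \emph{fixed} datasets) may legitimately be applied after conditioning on the data.
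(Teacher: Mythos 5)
Your proof is correct and is essentially the standard ghost-sample/symmetrization argument of \citet{hardt2016train}, which is exactly the proof the paper relies on (the paper states this theorem as a cited result without reproving it). The relabeling identity and the application of stability after conditioning on the data are both handled correctly, so there is nothing to add.
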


\paragraph{Hypothesis Class.} As proved in \citep{xing2021on,xiao2022stability}, $h(w;z)$ is non-smooth even even though we assume that its standard counterpart is smooth. Therefore, we focus on non-smooth loss minimization. This class is denoted by $\mathcal{H}$ and is defined as follows:
\begin{equation*}
\label{H}
\begin{aligned}
    \mathcal{H} =\{h: W\times\mathcal{Z}\rightarrow \mathbb{R} \mid L\text{-Lipschitz in }w, |W|=D_W\}.
\end{aligned}
\end{equation*}
In this paper, we explore both convex and non-convex settings. When the standard loss is convex, the adversarial loss is also convex. In cases where the standard loss is non-convex, the adversarial loss can still be demonstrated to be weakly convex, owing to the smoothness of the standard loss.

In experiments, we consider the following two losses for adversarial training.
\paragraph{Adversarial Loss.} Let the loss function $h(w;z)=\max_{\|x-x'\|\leq \epsilon} \ell(f_w(x'),y),$ where $\epsilon$ is the perturbation intensity. Here $f_w(\cdot)$ is a neural network parameterized by $w$, and $z=(x,y)$ is the input-label pair. If the neural networks are defined in a compact domain, \emph{i.e.,}
$\|x\|\leq B
, \forall x\in \mathcal{X}$, the loss function $h(w;z)$ is $L$-Lipschitz. It is shown that adversarial loss is $L$-Lipschitz given the standard loss is $L$-Lipschitz, and it is non-smooth even if the standard loss is smooth \citep{xiao2022stability}.

\paragraph{TRADES Loss.} Let the loss function be $$h(w;z)=\ell(f_w(x'),y)+\beta\max_{\|x-x'\|\leq \epsilon}\ell(f_w(x'),f_w(x)),$$ where $\beta$ is a hyperparameter \citep{zhang2019theoretically}. Similar to the adversarial loss, the inner maximization problem induces the non-smoothness of the TRADES loss.

\section{Moreau Envelope-$\mathcal{A}$}
\label{s5}
In this section, we introduce the algorithms, Moreau Envelope-$\mathcal{A}$\footnote{The initial version of the algorithm is refer to as Smoothed-SGDmax \cite{xiao2022smoothed}.}, to achieve $\mathcal{O}(T^q/n)$-uniform stability for non-smooth loss minimization. Although our primary findings are framed within non-convex settings, we begin our discussion in convex settings to streamline the theoretical exposition.
\subsection{Equivalent Problem}
We use the Moreau envelope function to define the surrogate loss. Let 
{\small\begin{eqnarray}
K(w,u;z)=h(w;z)+\frac{p}{2}\|w-u\|^2.
\end{eqnarray}}
We can choose $p>0$ to insure that $K(w,u;z)$ is strongly convex with respect to $w$. We define the Moreau envelope function of the empirical loss:
{\small\begin{eqnarray}
 M(u;S)\nonumber&=&\min_{w\in W}K(w,u;S)=\min_{w\in W}\frac{1}{n}\sum_{z\in S}K(w,u;z),\label{eq:M}\\
w(u;S)&=& \arg\min_{w\in W}K(w,u;S).
\end{eqnarray}}
Employing the Moreau envelope function to the empirical loss (rather than the loss $h(w;z)$ as in MYS) is an important steps in our approach. We defer the comparison of ME-$\mathcal{A}$ and MYS to Section \ref{further} to show why it is important. The following Lemma holds for the Moreau envelope function $M(u;S)$.

\begin{lemmalist}[Equivalent Problem]
\label{l0}
Assume that $h\in\mathcal{H}$. Let $p>0$. Then, $\min_u M(u;S)$ has the same global solution set as $\min_w R_S(w)$.
\end{lemmalist}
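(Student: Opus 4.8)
The plan is to recognize $M(u;S)$ as the Moreau envelope of the empirical risk $R_S$ and to exploit only the nonnegativity of the proximal term, so that no convexity of $h$ is needed and the argument is purely structural. First I would rewrite the inner objective as $K(w,u;S)=R_S(w)+\frac{p}{2}\|w-u\|^2$, so that $M(u;S)=\min_{w\in W}\big[R_S(w)+\frac{p}{2}\|w-u\|^2\big]$. Since $h\in\mathcal{H}$ is $L$-Lipschitz and $W$ is bounded, the relevant minima are attained, so I may legitimately write $w(u;S)$ for a minimizer of $K(\cdot,u;S)$ and speak of global solution sets rather than infima.

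The first substantive step is to show the two problems share the same optimal value, i.e. $\min_u M(u;S)=\min_w R_S(w)$. The upper bound follows from the feasible choice $w=u$ in the inner minimization, which gives $M(u;S)\le R_S(u)$ and hence $\min_u M(u;S)\le \min_w R_S(w)$. The lower bound follows because the quadratic term is nonnegative, so $M(u;S)\ge \min_{w}R_S(w)$ for every $u$. Combining the two yields equality of optimal values.

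The second step identifies the solution sets. For the forward inclusion, I would take any minimizer $w^\star$ of $R_S$ and evaluate at $u=w^\star$: then $M(w^\star;S)\le R_S(w^\star)=\min_w R_S(w)=\min_u M(u;S)$, so $w^\star$ also minimizes $M(\cdot;S)$. For the reverse inclusion, I would take any minimizer $u^\star$ of $M(\cdot;S)$ and decompose $M(u^\star;S)=R_S\big(w(u^\star;S)\big)+\frac{p}{2}\|w(u^\star;S)-u^\star\|^2=\min_w R_S(w)$. Since the first summand is at least $\min_w R_S(w)$ and the second is nonnegative, both bounds must be tight: tightness of the quadratic term forces $w(u^\star;S)=u^\star$, and tightness of the first forces $R_S(u^\star)=\min_w R_S(w)$, so $u^\star$ minimizes $R_S$. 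The two inclusions together give the claimed equality of global solution sets.

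I do not expect a serious obstacle; the only delicate points are bookkeeping. I would check that the minima are genuinely attained (guaranteed by Lipschitzness of $h$ with boundedness of $W$, and by $p>0$ keeping $K(\cdot,u;S)$ coercive in $w$), and that the range of $u$ contains the minimizers of $R_S$ so the substitution $u=w^\star$ is admissible; note the reverse direction automatically places $u^\star=w(u^\star;S)$ inside $W$, so no minimizer of $M$ escapes $W$. It is worth emphasizing in the write-up that convexity plays no role: the equivalence rests solely on $\frac{p}{2}\|w-u\|^2\ge 0$ with equality iff $w=u$, which is exactly what makes the min-min reformulation (P.2) lossless at the level of global solutions.
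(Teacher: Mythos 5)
Your proof is correct and follows essentially the same route as the paper's: evaluating $K$ at $u=w^{\star}$ and sandwiching via the nonnegativity of the proximal term $\frac{p}{2}\|w-u\|^2$. If anything, your write-up is more complete than the paper's, which only spells out the forward inclusion (minimizers of $R_S$ are minimizers of $M$) and leaves the reverse inclusion implicit, whereas your tightness argument forcing $w(u^{\star};S)=u^{\star}$ handles it explicitly.
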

As $\min_u M(u;S)$ has the same global solutions as $\min_wR_S(w)$, the original problem is equivalent to the problem of minimizing $M(u;S)$, \emph{i.e.,}
\begin{eqnarray}
&\min_w R_S(w)\\\Leftrightarrow & \min_u\min_{w}\frac{1}{n}\sum_{z\in S}K(w,u;z).\label{eq:Mloss}
\end{eqnarray}
Therefore, we can alternatively minimize the inner and outer problems to find the solutions of the original problem. Such decomposition allows us to disentangle the non-strong convexity and non-smoothness of the original problem. Specifically, the inner problem is strongly-convex and non-smooth, and the outer problem is convex and smooth. We can achieve uniform stability for both of the two problems. Below we provide the details.

\paragraph{Uniform Stability of Inner Minimization.} Based on the definition, $K(w,u;z)$ is strongly-convex. The inner minimization problem is a strongly-convex, non-smooth problem. The following Lemma shows that the minimizer of a strongly-convex problem is $\mathcal{O}(1/pn)$-uniformly stable.

\begin{lemma}[Uniform Stability of Inner Minimization]\label{l2}
Assume that $h\in\mathcal{H}$. Let $p>0$. For neighbouring $S$ and $S'$, we have
$$\|w(u;S)-w(u;S')\|\le 2L/(np).$$
\end{lemma}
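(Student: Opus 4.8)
The plan is to run the standard stability argument for minimizers of a strongly convex empirical objective, exploiting the fact that the proximal term $\frac{p}{2}\|w-u\|^2$ is \emph{identical} across the two neighbouring datasets and therefore cancels. Write $w_S\eqdef w(u;S)$ and $w_{S'}\eqdef w(u;S')$, and set $F_S(w)\eqdef K(w,u;S)=\frac1n\sum_{z\in S}[h(w;z)+\frac p2\|w-u\|^2]$. Since each $h(\cdot;z)$ is convex and $\frac p2\|w-u\|^2$ is $p$-strongly convex, $F_S$ is $p$-strongly convex with unique minimizer $w_S$ (and likewise $F_{S'}$, $w_{S'}$).

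First I would record the two inequalities that strong convexity gives when each minimizer is compared against the other point. Because $w_S$ minimizes the $p$-strongly convex $F_S$, first-order optimality yields
\[
F_S(w_{S'})-F_S(w_S)\ge \tfrac p2\|w_{S'}-w_S\|^2,
\]
and symmetrically $F_{S'}(w_S)-F_{S'}(w_{S'})\ge \tfrac p2\|w_S-w_{S'}\|^2$. Adding the two gives
\[
p\|w_S-w_{S'}\|^2\le \big[F_S(w_{S'})-F_S(w_S)\big]+\big[F_{S'}(w_S)-F_{S'}(w_{S'})\big].
\]

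The second step is to bound the right-hand side using that $S$ and $S'$ differ in a single sample, say the $n$-th, so that $(F_S-F_{S'})(w)=\frac1n\big[h(w;z_n)-h(w;z_n')\big]$ and the quadratic term cancels. Regrouping the four terms as $(F_S-F_{S'})(w_{S'})-(F_S-F_{S'})(w_S)$ rewrites the bound as $\frac1n\big\{[h(w_{S'};z_n)-h(w_S;z_n)]-[h(w_{S'};z_n')-h(w_S;z_n')]\big\}$, and applying the $L$-Lipschitz property of $h$ to each bracket bounds this by $\frac{2L}{n}\|w_S-w_{S'}\|$. Combining with the previous display gives $p\|w_S-w_{S'}\|^2\le \frac{2L}{n}\|w_S-w_{S'}\|$, and dividing by $\|w_S-w_{S'}\|$ (the case $\|w_S-w_{S'}\|=0$ being trivial) produces the claimed $\|w_S-w_{S'}\|\le 2L/(np)$.

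The delicate point—more a feature than an obstacle—is the cancellation of the proximal term: it is precisely what makes the bound depend on the Lipschitz constant $L$ of $h$ rather than on any smoothness constant of $K$, and it is the reason the Moreau envelope is applied to the empirical loss $R_S$ here rather than to $h$ pointwise as in MYS. The only hypothesis to watch is that the $p$-strong convexity of $F_S$ hold uniformly; in the convex regime of this subsection it follows directly from convexity of $h$, whereas when the argument is reused in the weakly-convex setting one must take $p$ strictly larger than the weak-convexity modulus, which I would flag at that point.
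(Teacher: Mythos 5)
Your proof is correct and follows essentially the same route as the paper's: both exploit the $p$-strong convexity of the inner objective together with the fact that the two empirical objectives differ only in a single $\frac{1}{n}$-weighted, $L$-Lipschitz term (the proximal term cancelling), yielding the same $2L/(np)$ constant. The only cosmetic difference is that you invoke strong convexity through quadratic growth of function values at the two minimizers, whereas the paper uses the equivalent gradient form ($p\|w_S-w_{S'}\|\le\|\nabla K(w_S,u;S)-\nabla K(w_{S'},u;S)\|$ combined with the first-order optimality conditions); your closing remark about needing $p$ strictly above the weak-convexity modulus matches exactly how the paper extends the lemma to the weakly convex case.
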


\paragraph{Uniform Stability of Outer Minimization.} The outer problem is a $p$-smooth convex problem. It is due to the following properties of the Moreau envelope function $M(u,S)$.
\begin{lemmalist}[Smoothness and Convexity of Moreau Envelops Functions]
\label{l1}
Assume that $h\in\mathcal{H}$. Let $p>0$. Then, $M(u;S)$ satisfies
\begin{senenum}
\item The gradient of $M(u;S)$ is $\nabla_u M(u;S)=p(u-w(u;S))$.\label{l1_2}
\item \label{weakly1} $M(u;S)$ is convex in $u$.
\item  \label{gL1} $M(u;S)$ is $p$-gradient Lipschitz continuous.
\end{senenum}
\end{lemmalist}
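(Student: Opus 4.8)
The plan is to recognize $M(u;S)$ as the Moreau envelope of the (constrained) empirical risk $R_S$ and to deploy the three standard structural facts about such envelopes. Throughout I would write $K(w,u;S)=R_S(w)+\frac{p}{2}\|w-u\|^2$ and, since we are in the convex regime (so $R_S$ is convex), fold the constraint $w\in W$ into the objective via the indicator $\iota_W$, using that $W$ is convex and compact, so that $w\mapsto K(w,u;S)+\iota_W(w)$ is $p$-strongly convex and lower semicontinuous. Strong convexity guarantees that the minimizer $w(u;S)$ is unique and well defined for every $u$, which is the object appearing in all three claims.

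For convexity (part \ref{weakly1}) I would argue directly from partial minimization, independently of the gradient formula. The map $(w,u)\mapsto R_S(w)+\iota_W(w)+\frac{p}{2}\|w-u\|^2$ is jointly convex in $(w,u)$: the term $R_S+\iota_W$ is convex in $w$, and $\frac{p}{2}\|w-u\|^2$ is jointly convex because it is the squared norm composed with the linear map $(w,u)\mapsto w-u$. Since partial minimization of a jointly convex function over one block of variables yields a convex function of the remaining block, $M(u;S)=\min_w[\,\cdot\,]$ is convex in $u$.

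For the gradient formula (part \ref{l1_2}), the cleanest route is Danskin's/the envelope theorem: the parametrized objective $K(w,u;S)$ is continuously differentiable in $u$ with $\nabla_u K = p(u-w)$, the feasible set is $u$-independent, and the inner minimizer $w(u;S)$ is unique by strong convexity; hence $M$ is differentiable with $\nabla_u M(u;S)=\nabla_u K(w,u;S)\big|_{w=w(u;S)}=p\bigl(u-w(u;S)\bigr)$. This is exactly the asserted formula, and as a byproduct it shows $M$ is $C^1$ even though $h$ is nonsmooth.

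The real work is in the gradient-Lipschitz bound (part \ref{gL1}), and this is where I expect the main obstacle, mostly in handling the nonsmoothness of $h$ and the constraint $W$ carefully. The first-order optimality condition for the inner problem reads $p\bigl(u-w(u;S)\bigr)\in\partial\bigl(R_S+\iota_W\bigr)\bigl(w(u;S)\bigr)$. Writing $w_i=w(u_i;S)$ for $i=1,2$ and using monotonicity of the subdifferential of the convex function $R_S+\iota_W$, I get $\langle p(u_1-w_1)-p(u_2-w_2),\,w_1-w_2\rangle\ge 0$, i.e. $\langle u_1-u_2,\,w_1-w_2\rangle\ge\|w_1-w_2\|^2$ (firm nonexpansiveness of the proximal map). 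Setting $a=u_1-u_2$ and $b=w_1-w_2$, the gradient formula gives $\|\nabla M(u_1;S)-\nabla M(u_2;S)\|=p\|a-b\|$, and $\|a-b\|^2=\|a\|^2-2\langle a,b\rangle+\|b\|^2\le\|a\|^2-\|b\|^2\le\|a\|^2$, so $\|\nabla M(u_1;S)-\nabla M(u_2;S)\|\le p\|u_1-u_2\|$, establishing $p$-gradient-Lipschitz continuity. The one subtlety to get right is that the monotonicity step remains valid even though $h$ is nonsmooth and the domain is constrained, since $\partial(R_S+\iota_W)$ is a monotone operator; this is precisely what lets the entire argument run without invoking any smoothness of $h$.
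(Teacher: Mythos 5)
Your proposal is correct, and it takes a noticeably different --- and in places more careful --- route than the paper's own proof. The paper proves the weakly-convex generalization (Lemma A.1) and obtains the convex statement by setting $l=0$; its argument is entirely ``smooth'': it differentiates $M(u)=h(w(u))+\frac{p}{2}\|w(u)-u\|^2$ by the chain rule and kills the extra term with the first-order condition $\nabla h(w(u))+p(w(u)-u)=0$ to get the gradient formula, then implicitly differentiates that first-order condition to bound $\bigl[\partial w(u)/\partial u\bigr]^T$ and hence $\nabla_u^2 M$, which yields convexity and the gradient-Lipschitz constant. This presumes differentiability (indeed twice-differentiability) of $h$ and differentiability of $u\mapsto w(u)$, which is exactly what fails for the nonsmooth losses the paper cares about; your use of joint convexity plus partial minimization for part 2, Danskin/envelope reasoning for part 1, and subdifferential monotonicity for part 3 sidesteps all of that and is the rigorous version of the same facts. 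Your treatment of part 3 is also strictly better on a technical point: the paper's chain of inequalities passes through $\|{(u_1-u_2)}-{(w_1-w_2)}\|\le\max\{\|u_1-u_2\|,\|w_1-w_2\|\}$, which is not a valid inequality for general vectors, whereas your firm-nonexpansiveness computation $\|a-b\|^2\le\|a\|^2-\|b\|^2\le\|a\|^2$ is the standard correct argument and delivers the sharp constant $p$. The only thing the paper's route buys that yours does not is the quantitative weak-convexity modulus $pl/(p-l)$ and the constant $\max\{p,pl/(p-l)\}$ in the weakly-convex case, which are needed later for Theorem 4.7; your monotonicity argument would need to be redone with the strong monotonicity of $\partial(R_S+\iota_W)+lI$ to recover those, but for the convex statement as posed your proof is complete.
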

The proof of Lemma \ref{l1} is due to \citep{Rockafellar1976-js} and also provided in Appendix A.1. Based on the $p$-smooth property, the $\mathcal{O}(T^q/n)$-uniform stability of the outer problem is achieved by running gradient descent on $M(u;S)$.

\begin{lemma}[Uniform Stability of Outer Minimization]
\label{thm:exact}
Assume $h$ is a convex, $L$-Lipschitz function. Let $u^T(S)$ and $u^T(S')$ be the outputs of running GD on $M(u;S)$ and $M(u;S')$, respectively,  with fixed stepsize $\alpha\leq 1/p$ for $T$ steps. Then, the generalization gap satisfies

\begin{equation}\label{ATbound}
\begin{aligned}
        \|u^T(S)-u^T(S')\| \leq \bigg(\frac{2L{\luo  T\alpha}}{n}\bigg).
\end{aligned}
\end{equation}
\end{lemma}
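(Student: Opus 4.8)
The plan is to run the standard ``trajectory divergence'' argument for gradient descent, but applied to the outer objective $M(u;S)$ rather than to $h$ directly, exploiting the two facts established in \cref{l1}: that $M(\cdot;S)$ is convex and $p$-smooth, and that its gradient has the closed form $\nabla_u M(u;S)=p(u-w(u;S))$. I assume both runs start from the same initialization, so that the initial divergence $\delta_0=\|u^0(S)-u^0(S')\|=0$, and I track how $\delta_t=\|u^t(S)-u^t(S')\|$ grows over the $T$ steps.

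First I would write out one step of each recursion, $u^{t+1}(S)=u^t(S)-\alpha\nabla_u M(u^t(S);S)$ and similarly for $S'$, and then insert the mixed term $\alpha\nabla_u M(u^t(S');S)$ to split the one-step difference into two pieces. The first piece is $G_S(u^t(S))-G_S(u^t(S'))$, where $G_S(u)=u-\alpha\nabla_u M(u;S)$ is the gradient-descent map of a single fixed objective; the second piece is $\alpha\bigl(\nabla_u M(u^t(S');S)-\nabla_u M(u^t(S');S')\bigr)$, which measures how much the objective itself moved when we swapped one sample. By the triangle inequality, $\delta_{t+1}$ is bounded by the norms of these two pieces.

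The first piece is controlled by non-expansiveness of the gradient-descent map: since $M(\cdot;S)$ is convex and $p$-smooth by \cref{l1}, the map $G_S$ is $1$-Lipschitz whenever $\alpha\le 2/p$, which holds under the hypothesis $\alpha\le 1/p$ (this is the standard co-coercivity estimate for the gradient of a convex smooth function). Hence the first piece is at most $\delta_t$. The second piece is controlled by the inner-stability bound: using the gradient formula $\nabla_u M(u;S)=p(u-w(u;S))$ from \cref{l1_2}, the difference of gradients at a common point $u$ equals $p\bigl(w(u;S')-w(u;S)\bigr)$, whose norm is at most $p\cdot 2L/(np)=2L/n$ by \cref{l2}. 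Multiplying by $\alpha$ yields the recursion $\delta_{t+1}\le \delta_t+2L\alpha/n$.

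Finally, unrolling this recursion from $\delta_0=0$ over $T$ steps gives $\delta_T\le 2LT\alpha/n$, exactly the claimed bound. The only genuinely delicate step is the non-expansiveness of $G_S$: it relies on $M(\cdot;S)$ being simultaneously convex and smooth, so it is essential that both properties in \cref{l1} be in hand before the argument runs. Everything else is bookkeeping with the triangle inequality and the two previously established lemmas, and the linear-in-$T$ growth (rather than geometric) is precisely what the convex, non-expansive regime buys us.
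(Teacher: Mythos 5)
Your proposal is correct and follows essentially the same route as the paper's own proof: the same insertion of the mixed gradient term $\alpha\nabla_u M(u^t(S');S)$, the same appeal to non-expansiveness of the gradient-descent map for the convex, $p$-smooth $M(\cdot;S)$, and the same use of the gradient formula together with Lemma~\ref{l2} to bound the perturbation term by $2L\alpha/n$ before unwinding the recursion. The only cosmetic difference is that you make the step-size condition for non-expansiveness ($\alpha\le 2/p$) and the initialization $\delta_0=0$ explicit, which the paper leaves implicit.
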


\textbf{Remark.} This upper bound is as tight as the result of running SGD on smooth convex finite-sum problems \citep{hardt2016train}. It is worth noticing that Lemma \ref{thm:exact} is not a corollary of the result of \citep{hardt2016train}, because $M(u,S)$ is not in the form of finite-sum. The proof of Lemma \ref{thm:exact} is based on Lemmas \ref{l2} and \ref{l1}. It is deferred to Appendix A.2.

\subsection{Uniform Stability of Moreau Envelope-$\mathcal{A}$}
\label{mea}
Let $\mathcal{A}$ be a first-order (stochastic) algorithm for the original problem $R_S(w)$, \emph{i.e.,} (BGD, SGD. In the equivalent problem, we directly apply $\mathcal{A}$ to the inner problem. Based on Lemma \ref{ATbound}, we apply GD to the outer problem. The algorithm is provided in Alg. \ref{alg:SSGDMax}).

\begin{algorithm}[h]
    \caption{Moreau Envelope-$\mathcal{A}$ (ME-$\mathcal{A}$)}
    \label{alg:SSGDMax}
\begin{algorithmic}[1]
\STATE Initialize $w^0$, $u^0$;
\STATE Choose stepsize $0<\alpha_t\leq 1/p$ (or let $\tau_t=1-\alpha_t p$, $0\leq\tau_t<1$);
\FOR{$t=0,1,2,\ldots,T$}
\STATE Let $w^t_0=w^t$;
\FOR{$s=0,1,2,\cdots,N$}
\STATE $w^{t}_{s+1}=\mathcal{A}(K(w^t_s,u^t;S))$;
\ENDFOR
\STATE $w^{t+1}=w^t_N$;
\STATE $u^{t+1}=u^t+\alpha_t p (w^{t+1}-u^t)$ (or $u^{t+1}=\tau_t u^t+(1-\tau_t) w^{t+1}$);
\ENDFOR
\end{algorithmic}
\end{algorithm}

To develop the uniform stability of ME-$\mathcal{A}$, we first assume the optimization error of the inner problem. Let
\begin{equation}
    \|w_N^t-w(u^t;S)\|\leq\varepsilon(\mathcal{A}),
\end{equation}
\emph{i.e.,} the distance between the output of the inner minimization problem $w_N^t$ and the optimal minimizer $w(u^t;S)$ in each iterations $t$ is at most $\varepsilon(\mathcal{A})$. For example, the convergence rate of running SGD on strongly convex, non-smooth minimization problem is $\mathcal{O}(1/N)$ \citep{nemirovskij1983problem}.

\begin{theorem}[Generalization bound of ME-$\mathcal{A}$ in Convex Case]
\label{thm:SSGDmax}
Assume that $h$ is convex and $L$-Lipschitz. Suppose we run ME-$\mathcal{A}$ with stepsize $\alpha_t\leq 1/p$ for $T$ steps. The generalization gap satisfies
\begin{equation}
\begin{aligned}
        \mathcal{E}_{gen} \leq L\bigg(\frac{2L}{n} +2p\varepsilon(\mathcal{A})\bigg)\sum_{t=1}^T\alpha_t.
\end{aligned}
\end{equation}
Furthermore, if algorithm $\mathcal{A}$ satisfies $\varepsilon(\mathcal{A})=\mathcal{O}(1/pn)$, the generalization gap satisfies
\begin{equation}
\begin{aligned}
\mathcal{E}_{gen} \leq \mathcal{O}\bigg(\frac{2L^2}{n}\bigg)\sum_{t=1}^T\alpha_t.
\end{aligned}
\end{equation}
\end{theorem}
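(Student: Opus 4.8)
The plan is to bound the uniform stability of ME-$\mathcal{A}$ and then invoke Theorem~\ref{thm:stab2gen} to convert it into a generalization bound. Taking the returned model to be the outer iterate $u^T$, the $L$-Lipschitzness of $h$ gives, for every $z$, $h(u^T(S);z)-h(u^T(S');z)\le L\|u^T(S)-u^T(S')\|$, so it suffices to control $\E_A\|u^T(S)-u^T(S')\|$ for neighbouring $S,S'$ started from the same (data-independent) initialization, i.e.\ $\delta_u^0:=\|u^0(S)-u^0(S')\|=0$. Writing $\delta_u^t:=\E_A\|u^t(S)-u^t(S')\|$, the whole argument reduces to a one-step recursion for $\delta_u^t$.

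To set up the recursion I would first rewrite the outer update as the convex combination $u^{t+1}=(1-\alpha_t p)u^t+\alpha_t p\,w^{t+1}$, so that
\[
\delta_u^{t+1}\le(1-\alpha_t p)\,\delta_u^t+\alpha_t p\,\E_A\|w^{t+1}(S)-w^{t+1}(S')\|.
\]
The core estimate is a three-term triangle-inequality decomposition of $\|w^{t+1}(S)-w^{t+1}(S')\|$ around the exact inner minimizers $w(u^t(S);S)$ and $w(u^t(S');S')$. The two optimization-error terms $\|w^{t+1}-w(u^t;\cdot)\|$ are each at most $\varepsilon(\mathcal{A})$ by assumption. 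The remaining term $\|w(u^t(S);S)-w(u^t(S');S')\|$ splits further into (i) a change of $u$ at fixed data, controlled by the nonexpansiveness of the proximal map $u\mapsto w(u;S)$, giving $\|w(u^t(S);S)-w(u^t(S');S)\|\le\|u^t(S)-u^t(S')\|=\delta_u^t$; and (ii) a change of data at fixed $u$, controlled by Lemma~\ref{l2}, giving $\|w(u^t(S');S)-w(u^t(S');S')\|\le 2L/(np)$. Combining yields $\E_A\|w^{t+1}(S)-w^{t+1}(S')\|\le\delta_u^t+2L/(np)+2\varepsilon(\mathcal{A})$.

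The key simplification is that substituting this back makes the contraction factor cancel: the $(1-\alpha_t p)\delta_u^t$ and the $\alpha_t p\,\delta_u^t$ coming from the nonexpansive term combine to exactly $\delta_u^t$, leaving the purely additive recursion
\[
\delta_u^{t+1}\le\delta_u^t+\alpha_t\Big(\tfrac{2L}{n}+2p\,\varepsilon(\mathcal{A})\Big).
\]
Telescoping from $\delta_u^0=0$ gives $\delta_u^T\le(\tfrac{2L}{n}+2p\varepsilon(\mathcal{A}))\sum_{t=1}^T\alpha_t$; multiplying by $L$ and applying Theorem~\ref{thm:stab2gen} yields the first displayed bound. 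The ``furthermore'' is then immediate: setting $\varepsilon(\mathcal{A})=\mathcal{O}(1/pn)$ makes $2p\varepsilon(\mathcal{A})=\mathcal{O}(1/n)$, which is absorbed into the first term up to constants.

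The main obstacle I anticipate is the nonexpansiveness of $u\mapsto w(u;S)$, which is precisely what prevents the error in $u$ from amplifying across iterations. I would establish it from the optimality condition $p(u-w(u;S))\in\partial R_S(w(u;S))$ together with monotonicity of the subdifferential of the convex (constrained) empirical risk: writing $d_w=w(u_1;S)-w(u_2;S)$ and $d_u=u_1-u_2$, monotonicity gives $\langle d_u-d_w,d_w\rangle\ge0$, hence $\|d_w\|^2\le\langle d_u,d_w\rangle\le\|d_u\|\,\|d_w\|$ and $\|d_w\|\le\|d_u\|$. A secondary point worth flagging is that this is genuinely not a corollary of Lemma~\ref{thm:exact}: that lemma assumes exact gradient steps on $M(u;S)$, whereas here the inner solve is inexact, so the $\varepsilon(\mathcal{A})$ terms must be threaded through the recursion as above.
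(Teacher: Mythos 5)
Your proposal is correct and is essentially the paper's own argument: the paper also reduces to the one-step recursion $\E\|u_S^{t+1}-u_{S'}^{t+1}\|\le\E\|u_S^{t}-u_{S'}^{t}\|+\tfrac{2L\alpha_t}{n}+2\alpha_t p\,\varepsilon(\mathcal{A})$, obtained by viewing the update as an inexact gradient step on $M(u;S)$ with gradient error $p\|w_N^t-w(u^t;S)\|\le p\,\varepsilon(\mathcal{A})$ and invoking the non-expansiveness underlying Lemma~\ref{thm:exact} together with Lemma~\ref{l2}. Your convex-combination formulation $u^{t+1}=(1-\alpha_t p)u^t+\alpha_t p\,w^{t+1}$ plus firm non-expansiveness of the proximal map is algebraically the same contraction as the paper's non-expansive GD step on the convex, $p$-smooth $M$, so the two proofs coincide up to presentation.
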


\paragraph{Remark:} In strongly-convex, non-smooth problem, $\varepsilon(\mathcal{A})=\mathcal{O}(1/pn)$ can be achieved by$\mathcal{A}$=SGD with diminishing stepsize \citep{nemirovskij1983problem}, for more discussion, see Appendix B. When the stepsize is fixed to $\alpha_t=\alpha$, it is showed that Alg. \ref{alg:SSGDMax} achieves generalization bound of $\mathcal{O}(T\alpha/n)$.

 \subsection{Non-convex Case}
\label{sec:weakly}
Our principal contribution is focused on non-convex settings, encompassing a function class that extends beyond the limitations of convex scenarios.

In this setting, it seems that nothing can be done since the adversarial loss is both non-smooth and non-convex. Fortunately, the smoothness of the standard loss guarantee that the adversarial loss is weakly convex defined as followed.

\begin{definition}Let $l>0$. A function is said to be $-l$-weakly convex if $\forall x$, $f(x)+l\|x\|^2/2$ is convex in $x$.

This can be attributed to the following reasons: If the standard loss is smooth, meaning it has a gradient that is Lipschitz continuous, then the standard loss exhibits both upper and lower curvature. The presence of lower curvature implies that the standard loss is weakly convex, as lower curvature is equivalent to weak convexity. Furthermore, when the maximum operation is applied to the standard loss, the resulting adversarial loss retains this property of weak convexity.

\end{definition} In this case, we require $p>l$ such that $M(u;S)$ is strongly convex. Firstly, we extend Lemma \ref{l0} to \ref{l1} to weakly convex cases, which are Lemma A.1 to A.3 presented in Appendix. Based on the Lemma, we can derive the stability-based generalization bounds for ME-$\mathcal{A}$ in weakly-convex case.

\begin{theorem}[Generalization bound of ME-$\mathcal{A}$ in Weakly-Convex Case]
\label{weakexact}
Assume that $h$ is a weakly-convex, $L$-Lipschitz function. Suppose we run ME-$\mathcal{A}$ with diminishing stepsize $\alpha\leq 1/\beta t$ for $T$ steps, where $\beta = \max\{p,pl/(p-l)\}$. Then, the generalization gap satisfies
\begin{equation}
\begin{aligned}
\mathcal{E}_{gen} \leq \mathcal{O}\bigg(\frac{2L^2 T^q}{\min\{l,p-l\}n}+\varepsilon(\mathcal{A})2LpT^q/\beta\bigg),
\end{aligned}
\end{equation}
where $q=\beta c<1$. Furthermore, if algorithm $\mathcal{A}$ satisfies $\varepsilon(\mathcal{A})=\mathcal{O}(1/(p-l)n)$, the generalization gap satisfies
\begin{equation}
\begin{aligned}
\mathcal{E}_{gen} \leq \mathcal{O}\bigg(\frac{2L^2 T^q}{\min\{l,p-l\}n}\bigg).
\end{aligned}
\end{equation}

\end{theorem}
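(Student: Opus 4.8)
The plan is to mirror the convex-case argument (Lemma~\ref{thm:exact} and Theorem~\ref{thm:SSGDmax}), replacing the non-expansiveness of the outer gradient step by the expansiveness estimate appropriate for non-convex smooth losses. First I would establish the weakly-convex analogues of Lemmas~\ref{l0}--\ref{l1} (the Appendix Lemmas A.1--A.3). The key observation is that, since $h$ is $-l$-weakly convex and $p>l$, the surrogate $K(w,u;z)=h(w;z)+\frac{p}{2}\|w-u\|^2$ is $(p-l)$-strongly convex in $w$: writing $h+\frac{l}{2}\|\cdot\|^2$ as convex exposes a residual $(p-l)$-strongly convex quadratic. Consequently the inner minimizer inherits the bound of Lemma~\ref{l2} with $p$ replaced by $p-l$, i.e.\ $\|w(u;S)-w(u;S')\|\le 2L/(n(p-l))$, and the prox map $u\mapsto w(u;S)$ is $\tfrac{p}{p-l}$-Lipschitz. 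The Moreau envelope still obeys $\nabla_u M(u;S)=p(u-w(u;S))$, but it is now only weakly convex (a direct quadratic check, e.g.\ $h(w)=-\tfrac{l}{2}\|w\|^2$, shows $M$ can be concave), and its gradient-Lipschitz constant sharpens to $\beta=\max\{p,pl/(p-l)\}$. This is the single structural change that drives the whole theorem.

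With these facts in hand, I would analyze the outer GD recursion on $M(\cdot;S)$. Because $M(\cdot;S)$ is no longer convex, the step map $G_t(u)=u-\alpha_t\nabla_u M(u;S)$ is not non-expansive; instead $\beta$-smoothness gives $\|G_t(x)-G_t(y)\|\le(1+\alpha_t\beta)\|x-y\|$. Let $\delta_t=\|u^t(S)-u^t(S')\|$. Inserting the intermediate term $\nabla_u M(u^t(S');S)$ and applying the triangle inequality splits $\delta_{t+1}$ into three pieces: (i) the expansive part $(1+\alpha_t\beta)\delta_t$; (ii) the single-sample perturbation $\alpha_t\|\nabla_u M(u^t(S');S)-\nabla_u M(u^t(S');S')\|=\alpha_t p\|w(u^t(S');S)-w(u^t(S');S')\|\le \alpha_t\,2Lp/(n(p-l))$, where the $1/n$ is supplied by inner stability; and (iii) the inner optimization error $2\alpha_t p\,\varepsilon(\mathcal{A})$ arising from using $w_N^t$ in place of $w(u^t;S)$ on both runs. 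This yields $\delta_{t+1}\le(1+\alpha_t\beta)\delta_t+\alpha_t\big(2Lp/(n(p-l))+2p\,\varepsilon(\mathcal{A})\big)$.

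Finally I would unroll this recursion with the diminishing schedule $\alpha_t=c/t\le 1/(\beta t)$. The accumulated expansion factor is $\prod_{s=t+1}^{T}(1+\beta c/s)\le\exp\big(\beta c\sum_{s>t}1/s\big)\le(T/t)^{\beta c}=(T/t)^{q}$ with $q=\beta c<1$, so $\delta_T\le c\big(2Lp/(n(p-l))+2p\varepsilon(\mathcal{A})\big)T^{q}\sum_{t=1}^{T}t^{-(1+q)}$, and the tail sum converges to an $\mathcal{O}(1)$ constant. Substituting $c\le 1/\beta$ and simplifying $\tfrac{p}{\beta(p-l)}=1/\max\{l,p-l\}\le 1/\min\{l,p-l\}$ gives $\delta_T=\mathcal{O}\big(2LT^q/(\min\{l,p-l\}n)+2p\varepsilon(\mathcal{A})T^q/\beta\big)$. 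Converting the output-parameter divergence via the $\tfrac{p}{p-l}$-Lipschitz prox map and the inner-stability term (both of the same or smaller order), multiplying by $L$ (which turns parameter distance into loss distance), and invoking Theorem~\ref{thm:stab2gen} produces the stated bound; the ``furthermore'' part is immediate once $\varepsilon(\mathcal{A})=\mathcal{O}(1/((p-l)n))$ makes the second term match the first.

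I expect the main obstacle to be the outer analysis in the absence of convexity: unlike Lemma~\ref{thm:exact}, where a convex smooth $M$ makes the gradient map non-expansive and delivers the clean $\mathcal{O}(T\alpha/n)$ bound, here only $\beta$-smoothness is available, so the factors $(1+\alpha_t\beta)$ compound and are precisely what degrade the rate to $\mathcal{O}(T^q/n)$. The delicate points are (a) pinning down the sharp weakly-convex Moreau smoothness constant $\beta=\max\{p,pl/(p-l)\}$ rather than a crude one, since $q=\beta c$ depends on it, and (b) choosing $c$ so that $q=\beta c<1$ while keeping $\sum_t t^{-(1+q)}$ summable. This is the weakly-convex counterpart of the stepsize bookkeeping in the non-convex stability analysis of \citet{hardt2016train}, transplanted onto full-batch GD on the Moreau envelope, where the crucial $1/n$ enters through inner stability rather than through random sampling.
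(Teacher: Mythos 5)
Your proposal follows essentially the same route as the paper's proof: the same three-term decomposition of $\|u_S^{t+1}-u_{S'}^{t+1}\|$ into the $(1+\alpha_t\beta)$-expansive part from $\beta$-smoothness of $M$, the $\mathcal{O}\big(Lp/((p-l)n)\big)$ single-sample perturbation supplied by the inner-stability lemma, and the $2\alpha_t p\,\varepsilon(\mathcal{A})$ inner-optimization error, followed by unrolling under $\alpha_t\le 1/(\beta t)$ with the accumulated factor $(T/t)^{\beta c}$. The only cosmetic difference is that you sum the recursion directly from $t=1$ using the convergence of $\sum_t t^{-(1+q)}$, whereas the paper transplants the $t_0$-burn-in device from \citet{hardt2016train}; both deliver the stated $\mathcal{O}(T^q)$ bound with $q=\beta c$.
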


Notice that existing uniform stability algorithms on non-smooth loss requires convexity assumption. A main benefit of ME-$\mathcal{A}$ is that the algorithm can achieve $\mathcal{O}(T^q/n)$-uniform stability in weakly-convex cases. For comparison, by applying SGD to adversarial training, the robust generalization gap satisfies
$\mathcal{E}_{gen} \leq \mathcal{O}(2LT^q\epsilon+\frac{2L^2 T^q}{n})$.

\subsection{Convex Risk Minimization}
Now, we turn to the excess risk minimization analysis of ME-$\mathcal{A}$ in convex case. The excess risk can be decomposed to the sum of optimization and generalization error, \emph{i.e.,}
\begin{center}
	\emph{Excess risk =} $R_{\mathcal{D}}(w)-\min_{w\in W}R_{\mathcal{D}}(w)\leq\mathcal{E}_{opt}+\mathcal{E}_{gen}$.
\end{center}
\begin{theorem}[Optimization error of ME-$\mathcal{A}$ in Convex Case]
	\label{thm:opterr}
	Assume that $h$ is a convex, $L$-Lipschitz function. Suppose we run ME-$\mathcal{A}$ with stepsize $\alpha\leq 1/p$ for $T$ steps. Then, the optimization error satisfies
	\begin{equation*}
		\begin{aligned}
			\mathcal{E}_{opt} &\leq\mathcal{O}\bigg(\frac{\|u_0-u^*\|^2}{2\alpha T}+\varepsilon(\mathcal{A})p\frac{\sum_{t=1}^T\|u^t-u^*\|^2}{T}\bigg)\\
   &=\mathcal{O}\bigg(\frac{D_0}{2\alpha T}+\varepsilon(\mathcal{A}) p D_1\bigg).
		\end{aligned}
	\end{equation*}
	Furthermore, if algorithm $\mathcal{A}$ satisfies $\varepsilon(\mathcal{A})=\mathcal{O}(1/p \alpha T)$, the optimization gap satisfies
	$\mathcal{E}_{opt} \leq \mathcal{O}\big(\frac{1}{T\alpha}\big)$.
\end{theorem}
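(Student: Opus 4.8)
The plan is to recognize that the outer iteration of ME-$\mathcal{A}$ is exactly an \emph{inexact} gradient-descent scheme on the smooth convex Moreau envelope $M(u;S)$, and then to run the classical smooth-convex GD convergence argument while carefully tracking the extra error introduced by the approximate inner solve. First I would build the bridge between the empirical risk $R_S$ and the envelope $M$. Since $M(u;S)=\min_w[R_S(w)+\frac p2\|w-u\|^2]$, evaluating at the inner minimizer gives $R_S(w(u;S))=M(u;S)-\frac p2\|w(u;S)-u\|^2\le M(u;S)$, while Lemma~\ref{l0} yields $\min_u M(u;S)=\min_w R_S(w)=R_S(\bar w)=M(u^*;S)$, where $u^*$ minimizes $M$. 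If the reported model is the (approximate) inner solution $\hat w$ at the averaged point $\bar u=\frac1T\sum_t u^t$, then $L$-Lipschitzness of $h$ together with $\|\hat w-w(\bar u;S)\|\le\varepsilon(\mathcal{A})$ gives $R_S(\hat w)-R_S(\bar w)\le [M(\bar u;S)-M(u^*;S)]+L\varepsilon(\mathcal{A})$, so it suffices to bound the suboptimality of $\bar u$ on $M$.

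Next I would analyze the outer step itself. By Lemma~\ref{l1_2} the true gradient is $\nabla^t:=\nabla_u M(u^t;S)=p(u^t-w(u^t;S))$, whereas ME-$\mathcal{A}$ uses the inexact direction $g^t=p(u^t-w^{t+1})$; hence the error $e^t:=g^t-\nabla^t=p(w(u^t;S)-w^{t+1})$ obeys $\|e^t\|\le p\varepsilon(\mathcal{A})$ directly from the definition of $\varepsilon(\mathcal{A})$. Because $M(\cdot;S)$ is convex and $p$-gradient Lipschitz (Lemma~\ref{l1}), the descent lemma applied to $u^{t+1}=u^t-\alpha_t g^t$ together with the step-size constraint $\alpha_t\le 1/p$ absorbs the quadratic term $\frac{\alpha_t}{2}\|g^t\|^2$, leaving the one-step inequality $M(u^{t+1})-M(u^*)\le \frac{1}{2\alpha_t}\big(\|u^t-u^*\|^2-\|u^{t+1}-u^*\|^2\big)-\langle e^t,u^{t+1}-u^*\rangle$.

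I would then telescope this over $t$ with fixed $\alpha_t=\alpha$, apply Jensen's inequality to pass to the running average $\bar u$, and bound the cross term by Cauchy--Schwarz, $|\langle e^t,u^{t+1}-u^*\rangle|\le p\varepsilon(\mathcal{A})\|u^{t+1}-u^*\|$. This yields $M(\bar u;S)-M(u^*;S)\le \frac{\|u_0-u^*\|^2}{2\alpha T}+\frac{p\varepsilon(\mathcal{A})}{T}\sum_t\|u^{t}-u^*\|$, which is exactly the claimed $\mathcal{O}\big(D_0/(2\alpha T)+\varepsilon(\mathcal{A})pD_1\big)$ form after folding the averaged distance-to-optimum into $D_1$ and absorbing the lower-order $L\varepsilon(\mathcal{A})$ term; taking expectation over the randomness of $\mathcal{A}$ gives the bound on $\mathcal{E}_{opt}$. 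The furthermore clause is then immediate: substituting $\varepsilon(\mathcal{A})=\mathcal{O}(1/(p\alpha T))$ makes the second term $\mathcal{O}(D_1/(\alpha T))$, matching the order of the first term, so $\mathcal{E}_{opt}\le\mathcal{O}(1/(T\alpha))$.

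The hard part will be the inexact-gradient bookkeeping: one must verify that the descent-lemma cancellation still closes with $g^t$ in place of $\nabla^t$ (this is precisely where $\alpha_t\le 1/p$ is used) and that the single cross term $\langle e^t,u^{t+1}-u^*\rangle$ is the \emph{only} carrier of the optimization error, so its Cauchy--Schwarz bound reproduces the $\varepsilon(\mathcal{A})pD_1$ contribution without spurious $\mathcal{O}(\alpha)$ residuals. A secondary subtlety is the passage from the envelope objective $M$ back to the true empirical risk $R_S$, which relies on the inequality $R_S(w(u;S))\le M(u;S)$ and on $\min_u M=\min_w R_S$ from Lemma~\ref{l0}; here I must be careful that the output is the inner solution at the averaged $\bar u$ rather than $\bar u$ itself, and that the residual inner-solve error $L\varepsilon(\mathcal{A})$ is of the same or lower order as the retained terms.
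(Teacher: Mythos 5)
Your proposal is correct and follows essentially the same route as the paper: treat the outer update as inexact gradient descent on the convex, $p$-smooth Moreau envelope $M(u;S)$, use the descent lemma with $\alpha\le 1/p$ to telescope $\frac{1}{2\alpha}\left(\|u^t-u^*\|^2-\|u^{t+1}-u^*\|^2\right)$, and isolate the inner-solve error in the single cross term $\langle \nabla M(u^t)-\nabla K(w^{t+1},u^t),\,u^{t+1}-u^*\rangle\le p\varepsilon(\mathcal{A})\|u^{t+1}-u^*\|$ before summing over $t$. Your explicit handling of the passage from $M$ back to $R_S$ via the inner solution at the averaged iterate is a detail the paper leaves implicit, but it does not change the argument.
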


A minimax lower bound of the excess risk is given in \citep{nemirovskij1983problem}: $ \min_w\max_{\mathcal{D}}\mathbb{E}_{S}[R_{\mathcal{D}}(w)-\min_{w\in W}R_{\mathcal{D}}(w)]\geq \Omega(LD_W/\sqrt{n})$. By setting $T\alpha=\mathcal{O}(\sqrt{n})$, ME-$\mathcal{A}$ achieves the optimal excess risk with respect to $T$ and $\alpha$, resulting in $\text{Excess risk}\leq\mathcal{E}_{opt}+\mathcal{E}_{gen}\leq \mathcal{O}(\sqrt{1/n}$). This means that ME-$\mathcal{A}$, with an optimal stopping criterion, achieves the minimax lower bound in $\Omega (1/\sqrt{n})$ for excess risk. On the other hand, by combining the lower bound of excess risk and the upper bound of optimization error, we can obtain a lower bound in $\Omega(T\alpha/n)$ for the generalization gap.

\paragraph{Iteration Complexity.} To avoid introducing additional computational cost, we hope the number of outer iterations $T=\mathcal{O}(1)$. Then, $\alpha=\mathcal{O}(\sqrt{n})$. Based on the condition $\alpha\leq 1/p$, we should have $p\leq\mathcal{O}(1/\sqrt{n})$. In this setting, the iteration complexity of ME-$\mathcal{A}$ is the same as that of $\mathcal{A}$. For example, Let $\mathcal{A}$= SGD, based on Thm. \ref{thm:SSGDmax} and \ref{thm:opterr}, it is required $\varepsilon(\mathcal{A})=\mathcal{O}(1/pn)$ for achieving uniform stability and   $\varepsilon(\mathcal{A})=\mathcal{O}(1/p
\alpha T)=\mathcal{O}(1/p\sqrt{n})$ for achieving $\mathcal{O}(1/T\alpha)$ optimization error. Then, the number of iterations of both of ME-$\mathcal{A}$ and $\mathcal{A}$ are both $\mathcal{O}(n^2)$. For more details, see Appendix B.2. 

\section{Comparison of Moreau Envelope-$\mathcal{A}$ with Existing Algorithms}
\label{further}
\begin{table*}[htbp]
        \caption{Comparison of Moreau envelope-$\mathcal{A}$ (ME-$\mathcal{A}$) with stochastic weight averaging (SWA), empirical risk minimization, Moreau Yosida smoothing (MYS) and proximal point methods (PPM) on non-smooth minimization problem. Only Moreau envelope-$\mathcal{A}$ achieves uniform stability for weakly-convex non-smooth problems. }
    \label{Compare1}
    \centering
    \begin{tabular}{cccccc}
    \toprule
        \multirow{2}{*}{Algorithms}&\multirow{2}{*}{Types}&\multirow{2}{*}{Difference to ME-$\mathcal{A}$}&\multicolumn{2}{c}{Convex} & Weakly-Convex\\
            \cmidrule(rl){4-5}
        && &Stability &  Complexity & Stability\\
       \midrule
        SWA &-& $p=0$, $\tau>0$  & \XSolidBrush  &  - & \XSolidBrush\\
        \midrule
        ERM &\multirow{2}{*}{Regularization}& $u=0$ & \Checkmark  & -   & \XSolidBrush\\
        (Phase)-ERM && $u=0$, $p\rightarrow 0$   & \Checkmark & $\mathcal{O}(n^2)$  & \XSolidBrush\\
        \midrule
       MYS&\multirow{3}{*}{Moreau Envelope}& $\mathbb{E}_S\min_w[\cdot]\Rightarrow\min_w\mathbb{E}_S[\cdot]$ & \Checkmark  & $\mathcal{O}(n^{4.5})$& \XSolidBrush\\
      PPM (Our proof) & & $\tau =0$ &  \Checkmark    & $\mathcal{O}(n^2)$& \XSolidBrush\\
       \textbf{ME-$\mathcal{A}$ (Ours)}& & N/A &  \Checkmark    & $\mathcal{O}(n^2)$ &  \Checkmark\\
       \bottomrule
    \end{tabular}
\end{table*}

ME-$\mathcal{A}$ looks similar to but essentially different from some existing algorithms. It is necessary to compare ME-$\mathcal{A}$ with these similar algorithms in detail and the summary of the comparison is provided in Table \ref{Compare1}.
\vspace{-0.1in}
\paragraph{Stochastic Weight Averaging (SWA).} SWA \citep{izmailov2018averaging} (or moving averaging in different literatures) suggests using the weighted average of the iterates rather than the final one for inference. The update rules of SWA is $u^{t+1}=\tau^t u^t +(1-\tau^t)w^{t+1}$. By Simply applying the analytical tools in the work of \citep{bassily2020stability}, we can see that SWA is not guarantee to be uniformly stable. On the other hands, SWA can be regarded as ME-$\mathcal{A}$ when $p\rightarrow 0$. In Alg. \ref{alg:SSGDMax}, if we denote $\tau^t=1-\alpha^t p$, Step 9 can be view as a weight averaging step. In Thm. \ref{thm:SSGDmax}, it is required that $0<\alpha^t\leq 1/p$. Then, $0\leq\tau^t=(1-\alpha^tp)< 1 $. Therefore, by fixing $\alpha^tp$ to be constant and letting $p\rightarrow 0$, ME-$\mathcal{A}$ is reduced to SWA.
Our analysis provides an understanding of the generalization ability of SWA.
\vspace{-0.1in}
\paragraph{Empirical Risk Minimization (ERM).} ERM (or weight decay in different literatures) is a technique used to regularize the empirical loss by adding a $\ell_2$ regularization term, \emph{i.e.,} the loss function is $h(w;z)+p\|w\|^2/2.$ If we replace Step 9 with $u=0$ in Alg. \ref{alg:SSGDMax}, it is reduced to ERM. In convex case, the regularized loss becomes strongly convex and ensures uniform stability \citep{bousquet2002stability}. \citet{feldman2020private} introduced a variant of ERM called Phase-ERM and showed that it achieves uniform stability in $\mathcal{O}(n^2)$ steps.

However, such theoretical guarantee cannot be extended to weakly-convex problems. This limitation arises because the regularization term changes the global solutions of the problems, necessitating that the parameter $p$ cannot be too large. In weakly-convex case, we may not have $p>l$, implying that $h(w;z)+p\|w\|^2/2$ may not exhibit strong convexity. As a result, ERM is not guaranteed to be uniformly stable in weakly-convex scenario. 

\subsection{Comparison of Moreau Envelope-Type Algorithms}

\paragraph{Moreau Yosida-Smoothing (MYS).} MYS is originally proposed to solve non-smooth convex optimization problems \citep{nesterov2005smooth}. It uses the Moreau envelope function to smooth the non-smooth loss $h(w;z)$. Then, algorithms can be applied to the smooth surrogate. The loss of MYS is
\begin{equation*}
\begin{aligned}
    &\min_u\mathbb{E}_S\min_w  \big[h(w;z)+\frac{p}{2}\|w-u\|^2\big].
\end{aligned}
\end{equation*}
The uniform stability analysis of MYS is provided in (\citet{bassily2019private}, cf. Thm. 4.4). It is shown that $\mathcal{O}(n^{4.5})$ steps are required to achieve uniform stability, which is inefficient. In deep learning, memorization cost is another issue. It is required to store $n$ different networks (or (n/batch size) in BGD settings) in memory, which is intractable. Additionally, it is worth notice that the theoretical analysis of MYS and ME-$\mathcal{A}$ are different. The analysis of MYS use standard tools of applying SGD to finite sum smooth problems.
\vspace{-0.1in}
\paragraph{Proximal Point Methods (PPM).} PPM uses the proximal operator $w(u)$ to be the update rule, \emph{i.e.,} $u^{t+1}=w(u^t)$. Notice that it is equivalent to apply GD to the Moreau envelope function $M(u;S)$ with constant stepsize $\alpha_t=1/p$ (or $\tau=0$). Therefore, PPM can be regarded as a special case of Moreau envelope-$\mathcal{A}$ ($\alpha=1/p$, $\mathcal{A}$=global minimizer). In convex case, a by-product of our results is that PPM achieves uniform stability for non-smooth loss minimization. However, PPM is not guaranteed to be uniformly stable in weakly-convex case. In Thm. \ref{weakexact}, diminishing stepsize is required for achieving $\mathcal{O}(T^q/n)$-uniform stability, but PPM is equivalent to GD with fixed stepsize.

\section{Experiments}
\label{sec:exper}
\begin{figure*}[h]
	\centering
	\subfigure[]{
		\centering
 	\includegraphics[width=0.31\linewidth]{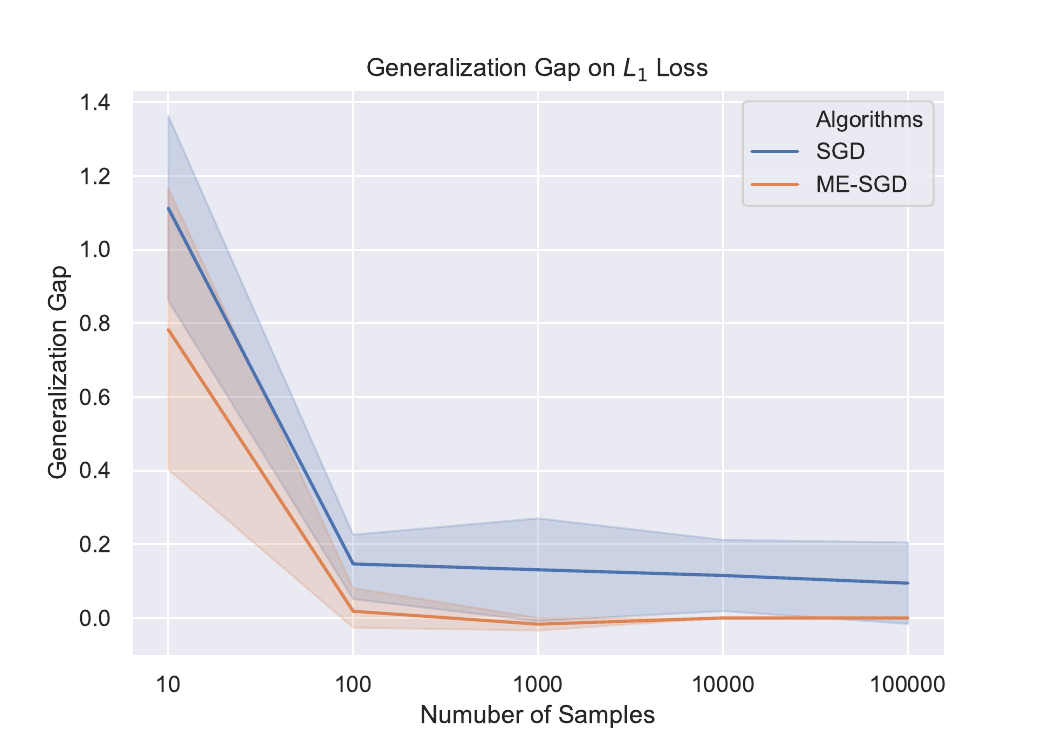}
	}
	\subfigure[]{
		\centering
		\includegraphics[width=0.31\linewidth]{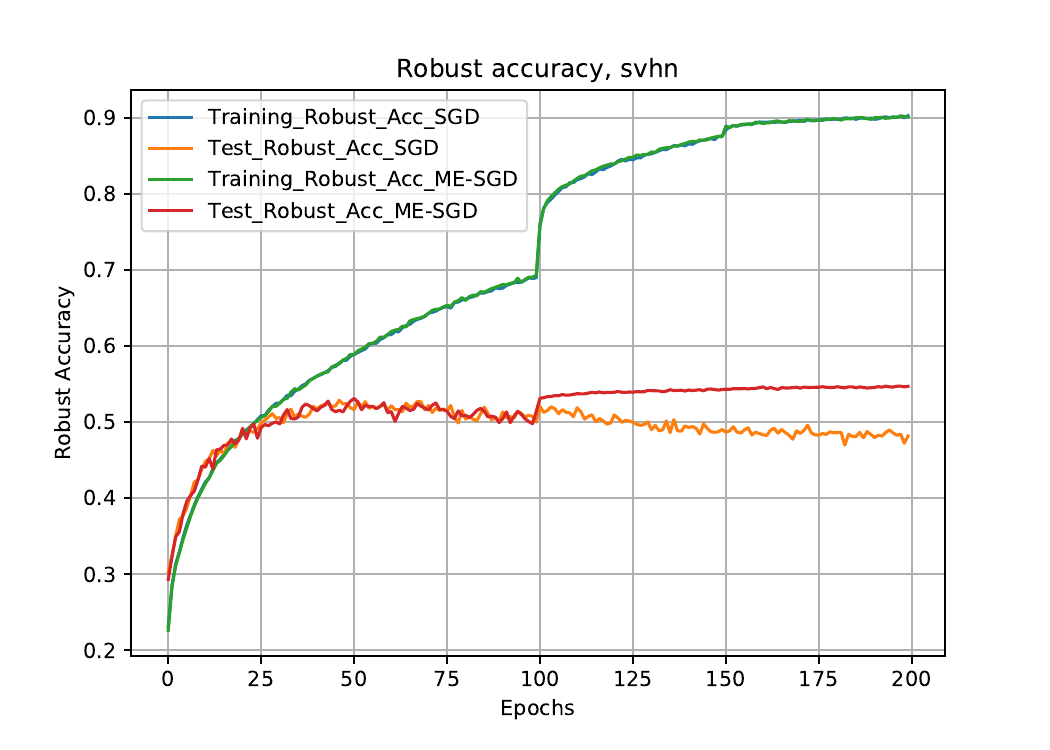}
	}
	\subfigure[]{
		\centering
		\includegraphics[width=0.31\linewidth]{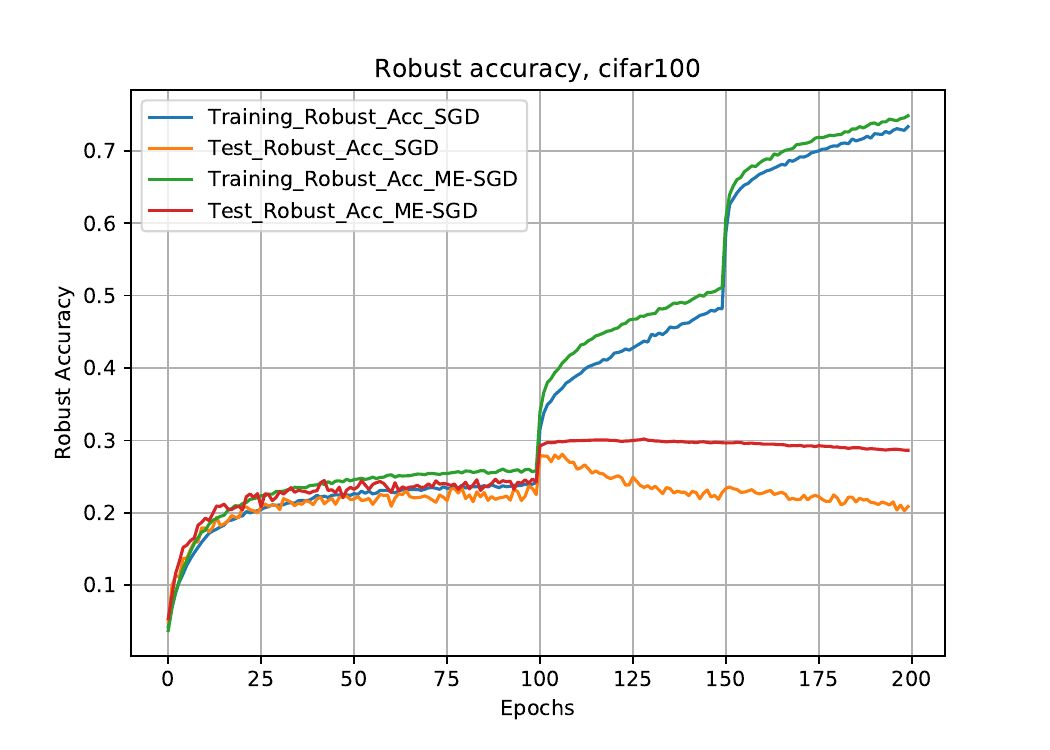}
	}
	\caption{(a): Comparison of generalization gap induced by SGD and ME-SGD for the toy example. (b) and (c): Robust test accuracy of adversarial training using SGD and ME-$\mathcal{A}$ on SVHN, and CFAR-100, respectively.}
	\label{fig:procedure}
\end{figure*}

We focus on $\mathcal{A}$=SGD in experiments to verify the theoretical results. We start from a toy experiment that perfectly matches the theory. Let the $L_1$ loss be $h(w,z)=\|w-z\|_1.$ It is 1-Lipschitz, since $ |h(w_1,z)-h(w_2,z)|=|\|w_1-z\|_1-\|w_2-z\|_1|\leq \|w_1-w_2\|_1,$ for all $z$. It is non-smooth due to the $\ell_1$-norm. Therefore, $L_1$-loss $\in\mathcal{H}$. 

Let the true distribution be a $10$-dimensions Gaussian distribution, $\mathcal{D}=\mathcal{N}(0,I)$. We sample 10 to 100000 data to train $w$ with SGD and ME-SGD. The results are shown in Figure \ref{fig:procedure} (a).  When the number of samples increases, the generalization gap induced by SGD does not converge to zero. While using ME-SGD, the gap converges to 0 fastly.
\subsection{Mitigating Robust Overfitting in $\mathcal{O}(T^q\epsilon)$}
Next, we turn our focus on adversarial training\footnote{Code is publicly available at \url{https://github.com/JiancongXiao/Moreau-Envelope-SGD}.}. We first study the robust overfitting issue. We mainly adopt the hyper-parameter settings of adversarial training in the work of \cite{gowal2020uncovering}. Weight decay is set to be $5\times 10^{-4}$. Based on Thm. \ref{weakexact}, the step size $\alpha_t$ of updating $u$ is set to be $1/pt$, then $\tau_t=1-\alpha_t p =(t-1)/t$. Interestingly, this theoretical-driven stepsize is consistent with the default setting of SWA stepsize used in practice. Ablation studies for the choice of $p$ and the choice of $\tau_t$ are provided in Appendix C.

To have a first glance at how ME-$\mathcal{A}$ mitigates robust overfitting, we consider the training procedure against $\ell_{\infty}$-PGD attacks \citep{madry2017towards} on SVHN, CIFAR-10, and CIFAR-100. For the attack algorithms, we use $\epsilon=8/255$. The attack step size is set to be $\epsilon/4$. We use piece-wise learning rates, which are equal to $0.1,0.01,0.001$ for epochs 1 to 100, 101 to 150, and 151 to 200, respectively. 

The experiments on CIFAR-10 is provided in Introduction. The experiments on SVHN and CIFAR-100 are provided in Figure \ref{fig:procedure} (b) and (c). It is shown that adversarially-trained models suffer from severe overfitting issues \citep{rice2020overfitting}. From the perspective of uniform stability, it might be due to the additional term $\mathcal{O}(T^q\epsilon)$ induced by SGD. For SGD, the robust test accuracy starts to decrease at around the $100^{th}$ epoch, which is a typical phenomenon of robust overfitting. 

By employing ME-$\mathcal{A}$, we significantly mitigate the issue of robust overfitting. Our experiments on SVHN and CIFAR-10/100 demonstrate that the robust test accuracy no longer diminishes. With this approach, DNNs fit the training adversarial examples well, reaching a performance ceiling that is constrained by the size of the existing dataset, denoted as $n$.
\subsection{Sample Complexity in $\mathcal{O}(T^q/n)$} 
\begin{figure}[h]
\centering
\subfigure[]{
\centering
\includegraphics[width=0.47\linewidth]{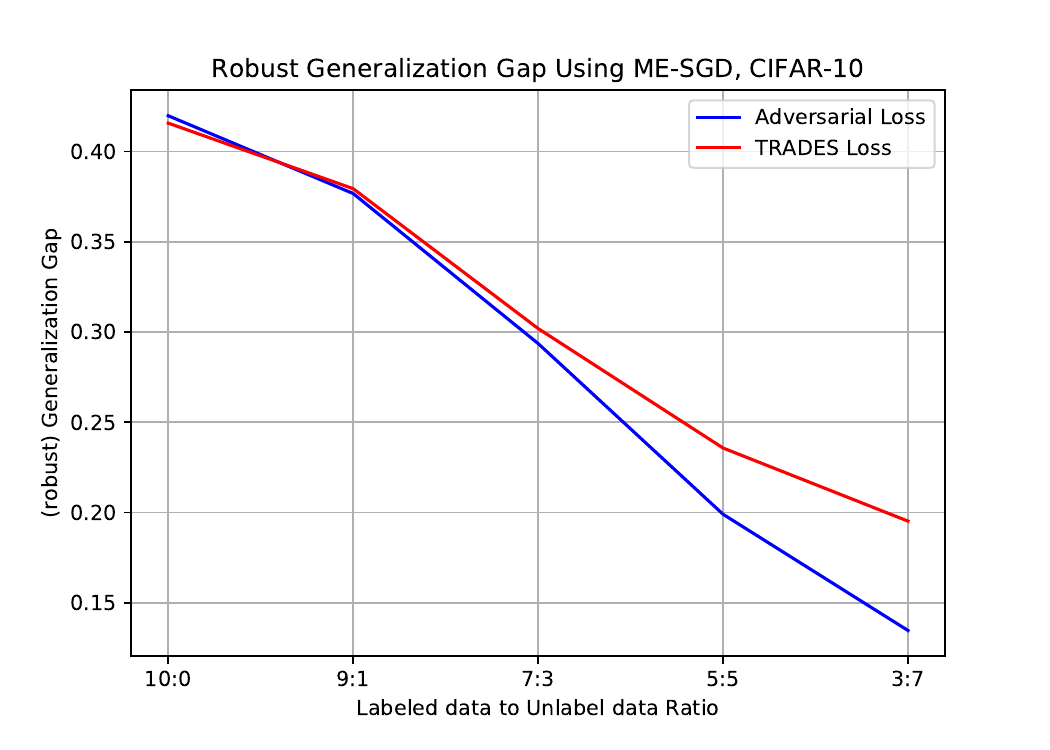}
}
\subfigure[]{
\centering
\includegraphics[width=0.47\linewidth]{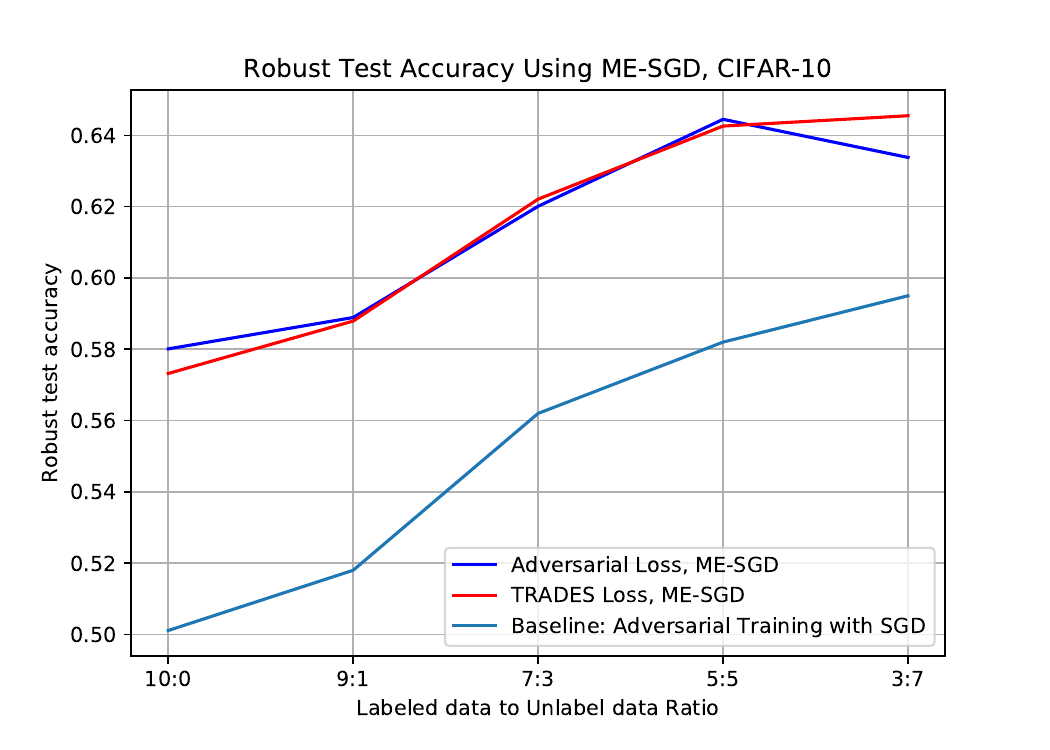}
}
\centering
\caption{\small{Robust generalization gap and robust test accuracy in the experiments of SGD and  ME-$\mathcal{A}$.}}
\label{fig:gap}
\end{figure}
Secondly, we examine the sample complexity characterized by $\mathcal{O}(T^q/n)$ as detailed in Theorem \ref{weakexact}. Considering that CIFAR-10 encompasses only 50,000 training samples, we utilize an additional pseudo-label dataset as introduced by \citet{carmon2019unlabeled} to analyze the sample complexity. Employing a greater proportion of pseudo-label data serves as a proxy for increasing the training dataset size. In Figure \ref{fig:gap}, we illustrate both the adversarial generalization gap (a) and the robust test accuracy (b). It is observed that by increasing the number of training data, denoted as $n$, the robust generalization gap decreases, subsequently enhancing the robust test accuracy.
\vspace{-0.1in}
\paragraph{Additivity of $\mathcal{O}(T^q\epsilon)$ and $\mathcal{O}(T^q/n)$.} In Table \ref{tab1}, we study the interaction between applying ME-$\mathcal{A}$ and adding additional data. The baseline performance of WideResNet-28$\times$10 on CIFAR-10 is reported in  \citep{gowal2020uncovering}. We adopt the AutoAttack \citep{croce2020reliable}, which is a collection of four attacks in default settings, to evaluate the performance of robust test errors. When applying the TRADES loss, the effect of ME-$\mathcal{A}$ is approximately 3\%, regardless of the presence of additional data. The benefit of incorporating extra data contributes around 8\% improvement, applicable to both SGD and ME-$\mathcal{A}$. A similar trend is observed with adversarial loss, with a small difference being that ME-$\mathcal{A}$ exerts a slightly lesser impact on adversarial loss, reducing to about 1\%. These experiments support the assertion that there is an additive relationship between robust overfitting and sample complexity. The enhancements gained from reducing robust overfitting and from addressing sample complexity can be cumulatively superimposed.

\begin{table}[htbp]
 \caption{Robust test accuracy on adversarial loss and TRADES loss. The term `+data' means addition unlabeled data with labeled to unlabeled data ratio 3:7 is added to the training set. The notion $\downarrow$ means the improvement mainly comes from reducing the underlying term.}
    \centering
    \resizebox{\linewidth}{!}{%
    \begin{tabular}{cccc}
        \toprule
    \multirow{2}{*}{Algorithm} & \multicolumn{2}{c}{AutoAttack} &\multirow{2}{*}{Stability} \\
    \cmidrule(rl){2-3}
     &  AT Loss & TRADES &  \\
    \midrule
    SGD &50.80{\scriptsize±0.23\%}& 51.91\%&$\mathcal{O}(T^q\epsilon+T^q/n)$\\
    +data  & 58.41{\scriptsize±0.25\%}& 59.45\%&$\mathcal{O}(T^q\epsilon+\underline{T^q/n}\downarrow)$\\
    \midrule
     ME-SGD  & 51.66{\scriptsize±0.16\%}& 55.23{\scriptsize±0.19\%} &$\mathcal{O}(\underline{T^q\epsilon}\downarrow+T^q/n)$\\
     +data  & 59.14{\scriptsize±0.18\%}& 62.76{\scriptsize±0.15\%}&$\mathcal{O}(\underline{T^q\epsilon+T^q/n}\downarrow)$\\
    \bottomrule
    \end{tabular}}
    \label{tab1}
\end{table}

\subsection{Discussion of Exisitng Algorithms}
In our final discussion, we study the empirical performance of methods listed in Table \ref{Compare1}. ERM, essentially weight decay, is already a common practice in deep learning. MYS, however, is not implementable due to memory constraints. Regarding PPM, its experimentation is part of the ablation study on the parameter $\tau$ provided in Appendix \ref{app:add}. When $\tau=0$, its performance similar to that of SGD. Notably, none of these methods can mitigate robust overfitting. In contrast, SWA demonstrates practical improvements in robust generalization, achieving robust test accuracy comparable to ME-$\mathcal{A}$, as exemplified in \citet{gowal2020uncovering}. The similarity between SWA and ME-$\mathcal{A}$ suggests that our theory may offer insights into SWA, particularly when considering $p\rightarrow 0$. However, it is important to note that while SWA shows practical promise, it lacks provable results, unlike ME-$\mathcal{A}$, which is theoretically underpinned. This theoretical assurance is a significant advantage of ME-$\mathcal{A}$ over SWA.

Recent studies, including \cite{rebuffi2021fixing}, indicate that robust generalization can be further enhanced by generating additional data using diffusion models. In our theoretical framework, the use of both pseudo-labeled data and generated data serves to optimize the terms $\mathcal{O}(T^q/n)$. In Table \ref{tab4}, we classify key existing methods for robust generalization into two distinct categories. This categorization helps validate our theoretical model.
\begin{table}[htbp]
 \caption{Existing approaches for improving adversarially robust generalization.}
    \centering
    \resizebox{\linewidth}{!}{%
    \begin{tabular}{cccc}
        \toprule
    Algorithms & Types & Empirical & Theoretical \\
    \midrule
    SWA &\multirow{2}{*}{Robust Overfitting}& \Checkmark&\XSolidBrush\\
    ME-$\mathcal{A}$  & & \Checkmark&\Checkmark\\
    \midrule
     Pseudo Labeled Data &\multirow{2}{*}{Sample Complexity}&\multicolumn{2}{c}{\Checkmark} \\
     Generated Data  &&\multicolumn{2}{c}{\Checkmark}\\
    \bottomrule
    \end{tabular}}
    \label{tab4}
\end{table}
\vspace{-0.1in}
\section{Conclusion}
In this paper, we present ME-$\mathcal{A}$, an approach aimed at achieving uniform stability for adversarial training and weakly convex non-smooth problems, with the goal of mitigating robust overfitting. One of the limitation is the weakly-convex assumption. While we have extended our results from the classical convex setting to a broader weakly convex framework, a gap still persists between weak convexity and the practical complexities of training deep neural networks (DNNs). Bridging this gap to encompass a more general subset of non-convex functions poses challenges. Furthermore, designing other new algorithms that are uniformly stable for non-convex non-smooth problems is a difficult task. In summary, we anticipate that ME-$\mathcal{A}$ will serve as an inspiration for the development of novel algorithms within the realm of deep learning.

\newpage
\section*{Potential Broader Impact}
This paper is primarily theoretical in nature, focusing on complex concepts and in-depth analysis. The potential impact lies in the possibility that our theoretical findings could inform future practical applications and advancements in the field. This could include improved algorithms, enhanced understanding of neural network behaviors, or even influencing the development of more robust systems in various technological domains. Such impacts, while emerging indirectly from our theoretical work, hold the promise of contributing significantly to both the academic community and real-world applications over time.
\bibliography{main.bib}
\bibliographystyle{icml2024}

\newpage
\appendix
\onecolumn
\section{Proof of Theorems}

\subsection{Proof of Lemma \ref{l0} and \ref{l1}}
\label{a1}
The proof of Lemma \ref{l0} and \ref{l1} are related. It suffices to prove the Lemma in weakly-convex case. We first state the Lemma in weakly-convex case.

\begin{lemmalist}
\label{la1}
Assume that $h$ is $-l$-weakly convex. Let $p>l$. Then, $M(u;S)$ satisfies
\begin{senenum}
\item $\min_u M(u;S)$ has the same global solution set as $\min_w R_S(w)$.
\item The gradient of $M(u;S)$ is $\nabla_u M(u;S)=p(u-w(u;S))$.
\item \label{weakly} $M(u;S)$ is $pl/(p-l)$-weakly convex in $u$.
\item  \label{gL} $M(u;S)$ is $\max\{p,pl/(p-l)\}$-gradient Lipschitz continuous.
\end{senenum}
\end{lemmalist}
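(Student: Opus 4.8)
The plan is to reduce the entire weakly-convex statement to the convex Moreau-envelope theory of Lemma~\ref{l1} (due to \citep{Rockafellar1976-js}) by a completing-the-square reparametrization. First I would record the structural fact underlying everything: writing $R_S(w)=\frac1n\sum_{z\in S}h(w;z)$, the inner objective is $K(w,u;S)=R_S(w)+\frac p2\|w-u\|^2$. Since each $h(\cdot;z)$ is $-l$-weakly convex, so is $R_S$, and adding $\frac p2\|\cdot-u\|^2$ with $p>l$ makes $K(\cdot,u;S)$ a $(p-l)$-strongly convex function of $w$; hence the minimizer $w(u;S)$ is unique and $M(u;S)$ is well defined. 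If $W$ is a proper (convex) constraint set I would fold its indicator into $R_S$ so that the convex machinery below applies verbatim.

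For the first assertion I would swap the two minimizations, $\min_u M(u;S)=\min_u\min_w K=\min_w\min_u K=\min_w R_S(w)$, since the inner minimization over $u$ for fixed $w$ is attained at $u=w$ and kills the quadratic term. To match the \emph{argmin} sets, given a minimizer $u^\star$ of $M$ I would use $M(u^\star;S)=R_S(w(u^\star;S))+\frac p2\|w(u^\star;S)-u^\star\|^2=\min_w R_S$; because $R_S(w(u^\star;S))\ge\min_w R_S$, the quadratic term must vanish, forcing $u^\star=w(u^\star;S)$ to be a minimizer of $R_S$. The reverse inclusion is immediate by taking $w=u$ in the definition of $M$.

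The heart of the proof is the algebraic identity from completing the square in $w$. Setting $\psi:=R_S+\frac l2\|\cdot\|^2$, which is convex precisely because $R_S$ is $-l$-weakly convex, I would rewrite $R_S(w)+\frac p2\|w-u\|^2=\psi(w)+\frac{p-l}2\big\|w-\frac p{p-l}u\big\|^2-\frac{pl}{2(p-l)}\|u\|^2$, and therefore $M(u;S)=\widetilde M\big(\frac p{p-l}u\big)-\frac{pl}{2(p-l)}\|u\|^2$, where $\widetilde M(v):=\min_w[\psi(w)+\frac{p-l}2\|w-v\|^2]$ is exactly the convex Moreau envelope of $\psi$ with parameter $p-l$. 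By the convex theory, $\widetilde M$ is convex, $C^1$, and $(p-l)$-gradient-Lipschitz, so $0\preceq\nabla^2\widetilde M\preceq(p-l)I$ in the generalized sense. The weak-convexity claim then follows because $M(u;S)+\frac{pl}{2(p-l)}\|u\|^2=\widetilde M\big(\frac p{p-l}u\big)$ is convex (a convex function composed with a linear map), i.e. $M$ is $\frac{pl}{p-l}$-weakly convex. The gradient-Lipschitz claim follows by the chain rule: $\nabla^2 M=\big(\frac p{p-l}\big)^2\nabla^2\widetilde M-\frac{pl}{p-l}I$, whose eigenvalues lie in $\big[-\frac{pl}{p-l},\,p\big]$ (the upper edge is $\big(\frac p{p-l}\big)^2(p-l)-\frac{pl}{p-l}=\frac{p^2-pl}{p-l}=p$), giving gradient-Lipschitz constant $\max\{p,\frac{pl}{p-l}\}$. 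For the gradient formula I would either differentiate this decomposition or apply Danskin's/the envelope theorem directly to $K$: since only the quadratic term depends on $u$ and $w(u;S)$ is the unique minimizer, $\nabla_u M(u;S)=\nabla_u\big[\frac p2\|w-u\|^2\big]\big|_{w=w(u;S)}=p\big(u-w(u;S)\big)$.

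The main obstacle will be justifying the reduction rigorously rather than formally: I must ensure $\psi$ is a proper, closed, convex function so that the convex Moreau-envelope results apply without modification (this is where folding the indicator of a convex $W$ into $\psi$ matters), and I must track the constants carefully through the linear reparametrization $v=\frac p{p-l}u$, since the smoothness constant scales by $\big\|\frac p{p-l}I\big\|^2=\big(\frac p{p-l}\big)^2$ while the weak-convexity modulus is read off as the coefficient $\frac{pl}{p-l}$ of the subtracted concave quadratic. The differentiability in the gradient formula also relies on uniqueness of $w(u;S)$, which is exactly where the $(p-l)$-strong convexity (hence $p>l$) is indispensable; once the completing-the-square identity is in hand, the remaining steps are routine.
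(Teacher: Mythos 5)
Your proposal is correct and arrives at all four assertions with the right constants, but by a genuinely different route from the paper. The paper works directly with the optimality condition $\nabla_w h(w(u))+p(w(u)-u)=0$: it differentiates this identity implicitly in $u$ to get $\big[\tfrac{\partial w(u)}{\partial u}\big]^T(\nabla_w^2h+pI)=pI$, bounds $\tfrac{\partial w(u)}{\partial u}$ between $0$ and $\tfrac{p}{p-l}I$, and reads the weak-convexity modulus and gradient-Lipschitz constant off the eigenvalue interval of $\nabla_u^2M=p(I-[\tfrac{\partial w}{\partial u}]^T)$. That argument presumes $h$ is (at least formally) twice differentiable, which is awkward for the non-smooth class $\mathcal{H}$ the lemma is actually meant for. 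Your completing-the-square identity $M(u;S)=\widetilde M\big(\tfrac{p}{p-l}u\big)-\tfrac{pl}{2(p-l)}\|u\|^2$, with $\widetilde M$ the classical Moreau envelope of the convex function $\psi=R_S+\tfrac l2\|\cdot\|^2$ at parameter $p-l$, sidesteps differentiability of $h$ entirely and imports the needed regularity ($C^{1,1}$, convexity, $(p-l)$-smoothness, and the prox-based gradient formula) from the standard convex theory; it also makes transparent why the two constants $p$ and $\tfrac{pl}{p-l}$ appear as the two edges of the curvature interval. The constants match the paper's exactly (your computation $(\tfrac{p}{p-l})^2(p-l)-\tfrac{pl}{p-l}=p$ checks out), and your part-1 argument via swapping the minimizations plus forcing the quadratic term to vanish is equivalent to, and slightly more explicit about the argmin sets than, the paper's chain of inequalities. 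Two small points to nail down in a full write-up: (i) the passage from ``generalized Hessian with eigenvalues in $[-\tfrac{pl}{p-l},p]$'' to the Lipschitz constant $\max\{p,\tfrac{pl}{p-l}\}$ should be justified without pointwise Hessians, e.g.\ via Rademacher and the integral mean-value form for the $C^{1,1}$ function $\widetilde M$, or by exhibiting both $M+\tfrac{pl}{2(p-l)}\|\cdot\|^2$ and $\tfrac p2\|\cdot\|^2-M$ as convex; and (ii) when $W$ is a proper constraint set, the identification $\tilde w\big(\tfrac{p}{p-l}u\big)=w(u;S)$ and the gradient formula require the indicator of $W$ to be absorbed into $\psi$, which you correctly flag.
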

Lemma \ref{l0} and \ref{l1} hold by letting $l=0$. To simplify the notation, we use $M(u)$ as a short hand notation of $M(u;S)$. Similar to $h(u)$, $K(u)$, and $w(u)$.

1. Let $w^*\in \argmin R_S(w)${\luo. We have}
\[R_S(w^*)=K(w^*,u=w^*,S)\geq K(w(u),u=w^*,S)\geq R_S(w(u=w^*)).\]
{\luo Then, the equality holds.} Therefore, $w=u=w^*$ is the optimal {\luo solution} of both {\luo $\min_w R_S(w)$ and $\min_uM(u;S)$.}

2. Since $K(w,u)$ is a ($p-l$)-strongly convex function, $w(u)$ is unique. Then
\[
M(u)=h(w(u))+\frac{p}{2}\|w(u)-u\|^2.
\]
{\luo By taking} the derivative of $M(u)$ with respect to $u$, we have
\begin{eqnarray}
\nabla_u M(u)&=&\bigg[\frac{\partial w(u)}{\partial u}\bigg]^T\cdot\nabla_{w(u)} h(w(u))+\bigg[\frac{\partial w(u)}{\partial u}-I\bigg]^T \cdot p(w(u)-u).\\
&=&\bigg[\frac{\partial w(u)}{\partial u}\bigg]^T\cdot (\nabla_{w(u)} h(w(u))+p(w(u)-u))+p(u-w(u)).\label{a7}
\end{eqnarray}
{\luo Since} $w(u)$ is the optimal solution of $K(w,u)$, we have
\begin{equation}\label{eq:firstorder}
    \nabla_{w(u)} K(w(u),u)= \nabla_{w(u)} h(w(u))+p(w(u)-u)=0.
\end{equation}
Therefore, the first term in \ref{a7} is equal to zero. We have $\nabla_u M(u)=p(u-w(u))$.

3. In Eq. (\ref{eq:firstorder}), take the derivatives with respect to $u$ on both sides, we have
\begin{equation}
  \bigg[\frac{\partial w(u)}{\partial u}\bigg]^T\nabla_w^2 h(w)+p(\bigg[\frac{\partial w(u)}{\partial u}\bigg]^T-I)=0.
\end{equation}
Organizing the terms, we have
\begin{equation}
  \bigg[\frac{\partial w(u)}{\partial u}\bigg]^T(\nabla_w^2 h(w)+pI)=pI.
\end{equation}
Since $h(w)$ is $-l$-weakly convex, $\nabla_w^2 h(w)+pI$ is positive definite. Then,
\begin{equation}\label{wu}
  \bigg[\frac{\partial w(u)}{\partial u}\bigg]^T\prec\frac{p}{p-l} I.
\end{equation}
Then,
\begin{equation}
    \nabla_u^2 M(u)=[\frac{\partial}{\partial u}p(u-w(u))]^T=p (I-\bigg[\frac{\partial w(u)}{\partial u}\bigg]^T) \succ p(1-\frac{p}{p-l})I.
\end{equation}
Therefore, $M(u)$ is a $pl/(p-l)$-weakly convex function.

4. By Eq. (\ref{wu}), we have
\begin{eqnarray}
        &&\|\nabla M(u_1)-\nabla M(u_2)\|\nonumber\\
        &=&p \|u_1-w(u_1)-u_2-w(u_2)\|\nonumber\\
        &\leq & p \max\{\|u_1-u_2\|,\|w(u_1)-w(u_2)\|\}\nonumber\\
        &\leq & p\max\{1,l/(p-l)\}\|u_1-u_2\|\nonumber\\
        &= & \max\{p,pl/(p-l)\}\|u_1-u_2\|\nonumber.
\end{eqnarray}
Therefore, $M(u;S)$ is $\max\{p,pl/(p-l)\}$-gradient Lipschitz continuous.
\qed

\subsection{Proof of Lemma \ref{l2} and \ref{thm:exact}}
Now we discuss the proof of the uniform stability of inner and outer minimization.
\label{proof51}

Proof of Lemma \ref{l2}: It suffices to prove the Lemma in weakly-convex case. We first introduce the following Lemma of uniform stability of inner minimization.
\begin{lemma}[Uniform Stability of Inner Minimization in Weakly-Convex Case]\label{A1}
In weakly-convex case, for neighbouring $S$ and $S'$, we have
$$\|w(u;S)-w(u;S')\|\le 2L/(n(p-\ell)).$$
\end{lemma}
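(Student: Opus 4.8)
The plan is to run the classical stability argument for strongly convex empirical risk minimization, exploiting the fact that the quadratic regularizer is identical for $S$ and $S'$ and therefore cancels, leaving only the Lipschitz part $h$ to control. Write $K(w,u;S)=\frac1n\sum_{z\in S}\big(h(w;z)+\frac p2\|w-u\|^2\big)$ and let $w=w(u;S)$, $w'=w(u;S')$ denote the two inner minimizers over $W$. Assume without loss of generality that $S$ and $S'$ differ only in their last sample, $z_n$ versus $z_n'$.

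First I would record that $K(\cdot,u;S)$ is $(p-l)$-strongly convex in $w$: each summand $h(w;z)+\frac p2\|w-u\|^2$ is the sum of a $-l$-weakly convex function and a $p$-strongly convex quadratic, hence $(p-l)$-strongly convex once $p>l$, and strong convexity is preserved under averaging. Since $W$ is convex and $w$ minimizes $K(\cdot,u;S)$ over $W$, the first-order optimality condition yields the strong-convexity lower bound
$$K(w',u;S)\ge K(w,u;S)+\frac{p-l}{2}\|w'-w\|^2,$$
and symmetrically $K(w,u;S')\ge K(w',u;S')+\frac{p-l}{2}\|w-w'\|^2$.

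Next I would add these two inequalities, so that the left-hand side becomes $\big[K(w',u;S)-K(w',u;S')\big]-\big[K(w,u;S)-K(w,u;S')\big]$. The crucial observation is that $K(w,u;S)-K(w,u;S')=\frac1n\big(h(w;z_n)-h(w;z_n')\big)$ — the regularizer $\frac p2\|w-u\|^2$ is common to both datasets and disappears — and likewise for $w'$. Regrouping the four resulting terms and invoking the $L$-Lipschitzness of $h$ in $w$ bounds the left-hand side by $\frac1n\big(|h(w';z_n)-h(w;z_n)|+|h(w';z_n')-h(w;z_n')|\big)\le \frac{2L}{n}\|w-w'\|$. Combining with the summed strong-convexity lower bound $(p-l)\|w-w'\|^2$ on the right and cancelling one factor of $\|w-w'\|$ gives $\|w-w'\|\le 2L/(n(p-l))$, which is the claim; Lemma \ref{l2} then follows by setting $l=0$.

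The argument is essentially routine once set up; the only points requiring care are (i) justifying the strong-convexity lower bound at a constrained, nonsmooth minimizer via the subgradient optimality condition $\langle g,\,w'-w\rangle\ge0$ for all $w'\in W$ and some $g\in\partial_w K(w,u;S)$, rather than a gradient-vanishing condition, and (ii) verifying that the quadratic term genuinely cancels in the dataset difference — which is precisely why the Moreau regularizer must be centered at the common point $u$ and scaled identically across samples. I expect no serious obstacle beyond keeping the bookkeeping of the four $h$-evaluations straight.
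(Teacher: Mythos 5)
Your argument is correct, and it takes a genuinely different route from the paper's. The paper proves this lemma through the (sub)gradient: it invokes strong monotonicity to get $(p-l)\|w(u;S)-w(u;S')\|\le\|\nabla K(w(u;S),u;S)-\nabla K(w(u;S'),u;S)\|$, then adds and subtracts $\nabla K(w(u;S'),u;S')$, uses the first-order optimality conditions $\nabla K(w(u;S),u;S)=0$ and $\nabla K(w(u;S'),u;S')=0$ to kill everything except the two single-sample terms $\frac1n\nabla h(w(u;S');z_i)$ and $\frac1n\nabla h(w(u;S');z_i')$, and bounds each by $L/n$ via Lipschitzness. You instead run the classical Bousquet--Elisseeff function-value argument: quadratic growth of a $(p-l)$-strongly convex objective at its constrained minimizer, summed over the two datasets, combined with the observation that $K(\cdot,u;S)-K(\cdot,u;S')$ reduces to a single $\frac1n$-weighted difference of $h$-values because the common regularizer cancels. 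The two proofs are of comparable length and yield the identical constant, but yours is somewhat more robust in the stated setting: it only needs the variational inequality $\langle g,\,w'-w\rangle\ge 0$ at a possibly boundary, possibly nonsmooth minimizer, whereas the paper's chain of equalities implicitly assumes the minimizer is interior so that the full (sub)gradient vanishes, and speaks of $\nabla h$ for a function that is assumed nonsmooth. The paper's version, in exchange, isolates exactly which gradient terms survive, which is the form reused later in the proofs of Lemma 4.4 and Theorem 4.7.
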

\begin{proof}
By the $(p-l)$-strongly convexity of $K(w,u;S)$, we have
\begin{eqnarray}
&&(p-l)\|w(u;S)-w(u;S')\|\nonumber\\
&\leq& \|\nabla K(w(u;S),u;S)-\nabla K(w(u;S'),u;S)\| \nonumber\\
&\leq& \|\nabla K(w(u;S),u;S)-\nabla K(w(u;S'),u;S')\|\nonumber\\
&&+\frac{1}{n}\|\nabla h(w(u;S'),z_i)\|+\frac{1}{n}\|\nabla h(w(u;S'),z_i')\|\nonumber\\
&=&\frac{1}{n}\|\nabla h(w(u;S'),z_i)\|+\frac{1}{n}\|\nabla h(w(u;S'),z_i')\|\nonumber\\
&\leq&\frac{2L}{n},\nonumber
\end{eqnarray}
where the second inequality is due to the definition of $K(w,u;S)$, the third one is due to the first-order optimally condition, and the last inequality is because of the bounded gradient of $h(w;z)$.
\end{proof}

Next, we move to the proof of Lemma \ref{thm:exact}. Lemma \ref{thm:exact} is not obtained from the work of \citep{hardt2016train}. Notice that
\[
M(u;S)=\min_{w\in W}\frac{1}{n}\sum_{z\in S}K(w,u;z)\neq \frac{1}{n}\sum_{z\in S}\min_{w\in W}K(w,u;z).
\]
$\min_uM(u;S)$ is not a finite sum problem. The analysis in \citep{hardt2016train} can only be applied to finite sum problems. Lemma \ref{thm:exact} requires a different proof. In summary, there are two steps:
\begin{enumerate}
    \item Build the recursion from $\|u_S^{t}-u_{S'}^{t}\|$ to $\|u_S^{t+1}-u_{S'}^{t+1}\|$;
    \item Unwind the recursion.
\end{enumerate}
The main difference comes from the first step. 

\paragraph{Step 1.}
\begin{eqnarray}
&&\|u_S^{t+1}-u_{S'}^{t+1}\|\nonumber\\
&=& \|u_S^{t}-u_{S'}^{t}-\alpha_t(\nabla M(u_S^t;S)-\nabla M(u_{S'}^t;S'))\|\nonumber\\
&\leq& \|u_S^{t}-u_{S'}^{t}-\alpha_t(\nabla M(u_S^t;S)+\nabla M(u_{S'}^t;S))\|+\alpha_t\|\nabla M(u_{S'}^t;S')-\nabla M(u_{S'}^t;S)\|\nonumber\\
&\leq& \|u_S^{t}-u_{S'}^{t}\|+\alpha^t\|\nabla M(u_{S'}^t;S')-\nabla M(u_{S'}^t;S)\|\nonumber\\
&=&  \|u_S^{t}-u_{S'}^{t}\|+\alpha^t p\|u_{S'}^t-u_{S'}^t-w(u_{S'}^t,S)+w(u_{S'}^t,S')\|\nonumber\\
&\leq&  \|u_S^{t}-u_{S'}^{t}\|+
\frac{2L\alpha_t}{n},\nonumber
\end{eqnarray}
{\luo where the second inequality is due to the non-expansive property of convex function , the last inequality is due to Lemma \ref{A1}. 

\paragraph{Step 2.} Unwinding} the recursion, we have
\[
\|u_S^{T}-u_{S'}^{T}\|\leq \frac{2L\sum_{t=1}^T\alpha_t}{n}.
\]\qed

\subsection{Proof of Thm. \ref{thm:SSGDmax}}
\begin{proof}
We decompose $\|u_S^{t+1}-u_{S'}^{t+1}\|$ as
\begin{eqnarray}\label{decomposition}
&&\mathbb{E}\|u_S^{t+1}-u_{S'}^{t+1}\|\nonumber\\
&=& \mathbb{E}\|u_S^{t}-\alpha_t\nabla
_u K(w_{N,S}^t,u_S^{t};S)-u_{S'}^{t}+\alpha_t\nabla_u K(w_{N,S'}^t,u_{S'}^{t};S')\|\nonumber\\
&\leq& \mathbb{E}\|u_S^{t}-\alpha_t\nabla
_u M(u_S^{t};S)-u_{S'}^{t}+\alpha_t\nabla_u M(u_{S'}^{t};S')\|\nonumber\\
&+&2\alpha_t \mathbb{E}\|\nabla
_u K(w_{N,S}^t,u_S^{t};S)-\nabla
_u M(u_S^{t};S)\|\nonumber\\
&\leq& \mathbb{E}\|u_S^{t}-u_{S'}^{t}\|+
\frac{2L\alpha_t}{n}+2\alpha_t p\mathbb{E}\|w_N^t-w(u^t)\|\nonumber\\
&\leq& \mathbb{E}\|u_S^{t}-u_{S'}^{t}\|+
\frac{2L\alpha_t}{n}+2\alpha_t p\varepsilon(\mathcal{A}).\nonumber
\end{eqnarray}
Unwind the recursion and let $u^T$ be the output of the algorithm. We have
\begin{eqnarray}
\mathcal{E}_{gen}&\leq& L\bigg(\frac{2L}{n} +2p\varepsilon(\mathcal{A})\bigg)\sum_{t=1}^T\alpha_t.\nonumber
\end{eqnarray}
{\luo If we choose $w(u^T)$ to be the algorithm output, we have}
\begin{eqnarray}
\mathcal{E}_{gen}&\leq& L\mathbb{E}\|w(u_S^{T};S)-w(u_{S'}^{T};S')\|\nonumber\\
&=& 
L\mathbb{E}\|u_S^{T}-\frac{1}{p}\nabla M(u_S^{T},S)-u_{S'}^{T}-\frac{1}{p}\nabla M(u_{S'}^{T};S'))\|\nonumber\\
&\leq& L\mathbb{E}\|u_S^{T}-u_{S'}^{T}\| + \frac{2L^2}{np}\nonumber\\
&\leq& L\bigg(\frac{2L}{n} +2p\varepsilon(\mathcal{A})\bigg)\sum_{t=1}^T\alpha_t + \frac{2L^2}{np}.\nonumber\\
\end{eqnarray}
{\luo where the first equality is due to $\nabla M(u;S)=p(u-w(u))$, the second inequality is due to the non-expansive propertiy of $M(u;S)$.}
\end{proof}

\subsection{Proof of Thm. \ref{weakexact}}
\label{sec:weakly2}

The proof contains two steps. The first step is to build the recursion, which is based on the error bound and the decomposition of Lemma \ref{thm:exact}. The second step is to unwind the recursion, which is adopt from the analysis of uniform stability in non-smooth case \citep{hardt2016train}.

Proof:

\paragraph{Step 1.}

We decompose $\|u_S^{t+1}-u_{S'}^{t+1}\|$ as

\begin{eqnarray}
&&\|u_S^{t+1}-u_{S'}^{t+1}\|\nonumber\\
&=& \mathbb{E}\|u_S^{t}-\alpha_t\nabla
_u K(w_{N,S}^t,u_S^{t};S)-u_{S'}^{t}+\alpha_t\nabla_u K(w_{N,S'}^t,u_{S'}^{t};S')\|\nonumber\\
&\leq& \mathbb{E}\|u_S^{t}-\alpha_t\nabla
_u M(u_S^{t};S)-u_{S'}^{t}+\alpha_t\nabla_u M(u_{S'}^{t};S')\|\nonumber\\
&+&2\alpha_t \mathbb{E}\|\nabla
_u K(w_{N,S}^t,u_S^{t};S)-\nabla
_u M(u_S^{t};S)\|\nonumber\\
&\leq& \|u_S^{t}-u_{S'}^{t}-\alpha_t(\nabla M(u_S^t;S)-\nabla M(u_{S'}^t;S'))\|+2\alpha_t p\varepsilon(\mathcal{A})\nonumber\\
&\leq& \|u_S^{t}-u_{S'}^{t}-\alpha_t(\nabla M(u_S^t;S)+\nabla M(u_{S'}^t;S))\|+\alpha_t\|\nabla M(u_{S'}^t;S')-\nabla M(u_{S'}^t;S)\|+2\alpha_t p\varepsilon(\mathcal{A})\nonumber\\
&\leq& \|u_S^{t}-u_{S'}^{t}\|+\alpha_t\|\nabla M(u_S^t;S)-\nabla M(u_{S'}^t;S)\|+\alpha^t\|\nabla M(u_{S'}^t;S')-\nabla M(u_{S'}^t;S)\|+2\alpha_t p\varepsilon(\mathcal{A})\nonumber\\
&\leq& (1+\alpha_t\beta)\|u_S^{t}-u_{S'}^{t}\|+\alpha^t\|\nabla M(u_{S'}^t;S')-\nabla M(u_{S'}^t;S)\|+2\alpha_t p\varepsilon(\mathcal{A})\label{b2},
\end{eqnarray}
 where the first inequality is due to triangular inequality, the second inequality is due to the assumption of the convergence rate of the inner minimization problem, and the third and fourth inequalities are due to triangular inequality. The last inequality is due to the gradient Lipschitz of $M(u;S)$ and $\beta=\max\{p,pl/p-l\}$. Then,
\begin{eqnarray}
&&\alpha^t\|\nabla M(u_{S'}^t;S')-\nabla M(u_{S'}^t;S)\|\nonumber\\
&=& \alpha^t p\|u_{S'}^t-u_{S'}^t-w(u_{S'}^t,S)+w(u_{S'}^t,S')\|\nonumber\\
&\leq&
\frac{2Lp\alpha_t}{(p-l)n}\label{b3},
\end{eqnarray}
where the first inequality is due to the form of $\nabla M(u;S)$, the last equality is due to Lemma \ref{A1}. 

Combining Eq. (\ref{b2}) and (\ref{b3}), we have
\begin{eqnarray}
&&\|u_S^{t+1}-u_{S'}^{t+1}\|\nonumber\\
&\leq& (1+\alpha_t\beta)\|u_S^{t}-u_{S'}^{t}\|+\frac{2Lp\alpha_t}{(p-l)n}+2\alpha_t p\varepsilon(\mathcal{A}).
\end{eqnarray}
\paragraph{Step 2.} Let $S$ and $S'$ be two samples of size $n$ differing in only a single
example. Consider two trajectories $w_1^1,\dots,w_1^T$ and $w_2^1,\dots,w_2^T$
induced by running SGD on sample $S$ and $S',$ respectively. Let $\delta_t=\|w_1^t-w_2^t\|$. Let $t_0\in\{0,1,\dots,n\},$ be the iteration that $\delta_{t_0}=0$, but SGD picks two different samples form $S$ and $S'$ in iteration $t_0+1$, then
\begin{equation}
\label{eq:nonconvex-diff}
    \mathcal{E}_{gen}\le \frac{t_0}n B
+ L\E\left[\delta_T\mid\delta_{t_0}=0\right]\,.
\end{equation}
Let $\Delta_t=\E\left[\delta_t\mid\delta_{t_0}=0\right]$, and $\alpha_t\leq c/(\beta t)$. Then, 

\begin{align*}
\Delta_{t+1}
& \le (1+\alpha_t\beta)\Delta_t +
\bigg(\frac{2 Lp}{(p-l)n}+2p\varepsilon(\mathcal{A})\bigg)\alpha_t\\
& = \left(1 + \frac{c\beta}t\right)\Delta_t + \bigg(\frac{2 Lp}{(p-l)n}+2p\varepsilon(\mathcal{A})\bigg)\frac{c}{t}\\
& \le \exp\left(\frac{c\beta}t\right)\Delta_t +\bigg(\frac{2 Lp}{(p-l)n}+2p\varepsilon(\mathcal{A})\bigg)\frac{c}{t}\,.
\end{align*}
Here we used the fact that $1+x\le\exp(x)$ for all $x.$

Using the fact that $\Delta_{t_0}=0,$ we can unwind this recurrence relation
from $T$ down to $t_0+1.$ This gives
\begin{align*}
\Delta_T &\leq \sum_{t=t_0+1}^T \left\{\prod_{k=t+1}^T
\exp\left(\tfrac{\beta c}{k}\right) \right\} \bigg(\frac{2 Lp}{(p-l)n}+2p\varepsilon(\mathcal{A})\bigg)\frac{c}{t}\\
&= \sum_{t=t_0+1}^T \exp\left( \beta c \sum_{k={t+1}}^T
\tfrac{1}{k} \right) \bigg(\frac{2 Lp}{(p-l)n}+2p\varepsilon(\mathcal{A})\bigg)\frac{c}{t}\\
&\leq \sum_{t=t_0+1}^T \exp\left( \beta c \log(\tfrac{T}{t})
\right) \bigg(\frac{2 Lp}{(p-l)n}+2p\varepsilon(\mathcal{A})\bigg)\frac{c}{t} \\
& =  \bigg(\frac{2 Lp}{(p-l)n}+2p\varepsilon(\mathcal{A})\bigg)c T^{\beta c } \sum_{t=t_0+1}^T t^{-\beta c - 1}\\
&\leq  \bigg(\frac{2 Lp}{(p-l)n}+2p\varepsilon(\mathcal{A})\bigg)\frac1{\beta c} c \left(\frac{T}{t_0}\right)^{\beta c } \\
&\le \frac{p}{\beta}\bigg(\frac{2L}{(p-l) n}+2\varepsilon(\mathcal{A})\bigg) \left(\frac{T}{t_0}\right)^{\beta c}\,,
\end{align*}
Plugging this bound into~\eqref{eq:nonconvex-diff},
we get
\[
\mathcal{E}_{gen} \le \frac{Bt_0}{n} +
\frac{Lp}{\beta}\bigg(\frac{2L}{(p-l) n}+2\varepsilon(\mathcal{A})\bigg) \left(\frac{T}{t_0}\right)^{\beta c}\,.
\]
Let $q = \beta c$. We select
t0 to optimize the right hand side
\[
\mathcal{E}_{gen} \le \frac{Bt_0}{n} +
\frac{Lp}{\beta}\bigg(\frac{2L}{(p-l) n}+2\varepsilon(\mathcal{A})\bigg) \left(\frac{T}{t_0}\right)^{\beta c}\,.
\]

If algorithm $\mathcal{A}$ satisfies $\varepsilon(\mathcal{A})=\mathcal{O}(1/(p-l)n)$, 
\begin{equation*}
\mathcal{O}\bigg(\frac{2LpT^{\beta c}\varepsilon(\mathcal{A})}{\beta}\bigg)=\mathcal{O}\bigg(\frac{2LpT^{\beta c}}{\beta(p-l)n}\bigg)=\mathcal{O}\bigg(\frac{2L T^{\beta c}}{\min\{l,p-l\}n}\bigg)
\end{equation*}
Let $q=\frac{\beta c}{1+\beta c}$, Optimize over $t_0$. Then, we have
\begin{equation}
\begin{aligned}
\mathcal{E}_{gen} \leq \mathcal{O}\bigg(\frac{2L^2 T^q}{\min\{l,p-l\}n}\bigg).
\end{aligned}
\end{equation}
\qed

\subsection{Proof of Thm. \ref{thm:opterr}}

Now we consider the convergence rate of $ME-\mathcal{A}$. In this part, we simply let $M(u)$ and $K(w,u)$ as short hand notations of $M(u;S)$ and $K(w,u;S)$.

By the $p$-gradient Lipschitzness of $M(u)$, we have

\begin{equation}
    M(u^{t+1})\leq M(u^t)+\langle\nabla M(u^t),u^{t+1}-u^t\rangle+\frac{p}{2}\|u^{t+1}-u^t\|^2.
\end{equation}
By the update rules of $u^t$, we have $u^{t+1}=u^t-\alpha_t \nabla K(w^{t+1},u^t)$ or $u^{t+1}=u^t+\alpha_t p (w^{t+1}-u^t)$. Then,
\begin{equation}
    M(u^{t+1})\leq M(u^t)-\alpha_t\langle\nabla M(u^t),\nabla K(w^{t+1},u^t)\rangle+\frac{p\alpha_t^2}{2}\|\nabla K(w^{t+1},u^t)\|^2.
\end{equation}

Based on the convexity of $M(u)$, we have
\begin{equation}
    M(u)\leq M(u^*)+\langle\nabla M(u),u-u^*\rangle.
\end{equation}
Then
\begin{eqnarray}
   &&M(u^{t+1})-M(u^*)\nonumber\\
   &\leq& \langle\nabla M(u^t),u^t-u^*\rangle-\alpha_t\langle\nabla M(u^t),\nabla K(w^{t+1},u^t)\rangle+\frac{p\alpha_t^2}{2}\|\nabla K(w^{t+1},u^t)\|^2\nonumber\\
    &=& \langle\nabla M(u^t), u^t-u^*-\alpha_t \nabla K(w^{t+1},u^t)\rangle+\frac{p\alpha_t^2}{2}\|\nabla K(w^{t+1},u^t)\|^2\nonumber\\
    &=& \langle\nabla M(u^t), u^{t+1}-u^*\rangle+\frac{p\alpha_t^2}{2}\|\nabla K(w^{t+1},u^t)\|^2\nonumber\\
    &=& \langle\nabla K(w^{t+1},u^t), u^{t+1}-u^*\rangle+\frac{p\alpha_t^2}{2}\|\nabla K(w^{t+1},u^t)\|^2\nonumber\\
    &&+\langle\nabla M(u^t)-\nabla K(w^{t+1},u^t), u^{t+1}-u^*\rangle.\label{eq:last}
\end{eqnarray}
Let $\alpha_t=\alpha\leq 1/p$. The sum of the first two terms is bounded by $\frac{1}{2\alpha}[\|u^t-u^*\|^2-\|u^{t+1}-u^*\|^2]$, it is because
\begin{eqnarray}
    &&\frac{1}{2\alpha}[\|u^t-u^*\|^2-\|u^{t+1}-u^*\|^2]\nonumber\\
    &=&\frac{1}{2\alpha}[2\alpha\langle\nabla K(w^{t+1},u^t),u^t-u^*\rangle-\alpha^2\|\nabla K(w^{t+1},u^t)\|^2]\nonumber\\
    &=&\langle\nabla K(w^{t+1},u^t),u^t-u^*\rangle-\frac{\alpha}{2}\|\nabla K(w^{t+1},u^t)\|^2\nonumber\\
    &=&\langle\nabla K(w^{t+1},u^t),u^{t+1}-u^*\rangle+\frac{\alpha}{2}\|\nabla K(w^{t+1},u^t)\|^2\nonumber\\
    &\geq&\langle\nabla K(w^{t+1},u^t),u^{t+1}-u^*\rangle+\frac{p \alpha^2}{2}\|\nabla K(w^{t+1},u^t)\|^2.\nonumber\\
\end{eqnarray}
The Last term in Eq. \ref{eq:last}
\begin{equation}
    \langle\nabla M(u^t)-\nabla K(w^{t+1},u^t), u^{t+1}-u^*\rangle\leq p\|w^{t+1}-w(u^t)\|\cdot\|u^{t+1}-u^*\|.
\end{equation}
Then, we have 
\begin{eqnarray}
   &&M(u^{t+1})-M(u^*)\nonumber\\
    &&\frac{1}{2\alpha}[\|u^t-u^*\|^2-\|u^{t+1}-u^*\|^2]+\varepsilon(\mathcal{A})p\|u^{t+1}-u^*\|.\nonumber\\
\end{eqnarray}
Sum over $t=0$ to $T$, we obtain
	\begin{equation}
		\begin{aligned}
			\mathcal{E}_{opt} \leq \mathcal{O}\bigg(\frac{\|u_0-u^*\|^2}{2\alpha T}+\varepsilon(\mathcal{A})p\frac{\sum_{t=1}^T\|u^t-u^*\|^2}{T}\bigg)=\mathcal{O}\bigg(\frac{D_0}{2\alpha T}+\varepsilon(\mathcal{A}) p D_1\bigg).
		\end{aligned}
	\end{equation}
	Furthermore, if algorithm $\mathcal{A}$ satisfies $\varepsilon(\mathcal{A})=\mathcal{O}(1/p \alpha T)$, the generalization gap satisfies
	\begin{equation}
		\begin{aligned}
			\mathcal{E}_{opt} \leq \mathcal{O}\bigg(\frac{1}{T\alpha}\bigg).
		\end{aligned}
	\end{equation}
	\qed

\section{Discussion on the Complexity of the Inner Problem}
\subsection{Convergence Rate of Strongly Convex Problems}
In this section, we consider the convergence rate of the inner minimization problem. Notice that the inner problem is a $(p-l)$-strongly convex, non-smooth problem.

\paragraph{Lipschitz Case.} We first consider the case that $K (w,u;z)$ is Lipschitz. If we assume that the domain $W$ of $w$ is  bounded by $D_W/2$, since
\[
\|\nabla_w K (w,u;z)\| = \|\nabla_w h(w;z)+p(w-u)\|\leq L+pD_W,
\]
$K (w,u;z)$ has bounded gradient $L_K=L+pD_W$, \emph{i.e.,} $ K (w,u;z)$ is $L_K$-Lipschitz. Then the convergence rate of running SGD on the inner problem can be obtained from classical strongly-convex optimization results.
\begin{lemma}\label{sc-nonsmooth2}
Given $t$ and $u^t$, suppose we run SGD on $K(w,u^t,S)$ w.r.t. $w$ with stepsize $c_s^t\leq 1/(p-l)s$ for $N$ steps. $w_N^{t}$ is approximately the minimizer with an error $C_1^2/N$, i.e.,
$$E\|w_N^{t}-w(u^t)\|^2\le \frac{C_1^2}{N},$$
where $C_1= (L+pD_W)/(p-l)$.
\end{lemma}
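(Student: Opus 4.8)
The plan is to recognize the inner objective $f(w) := K(w,u^t;S)$ as a textbook strongly convex stochastic optimization problem and to run the standard last-iterate analysis of projected stochastic subgradient descent. First I would record the two structural facts that make this work. Since $h(\cdot;z)$ is $-l$-weakly convex and $K(w,u^t;z) = h(w;z) + \frac{p}{2}\|w-u^t\|^2$, the per-sample objective—and hence $f$—is $\mu$-strongly convex with $\mu = p-l$; in particular its minimizer is unique and equals $w(u^t)$. Moreover, the stochastic subgradient used by SGD, $g_s = \nabla_w h(w_s;z) + p(w_s - u^t)$ for a sampled $z$, is an unbiased estimator of $\nabla_w f(w_s)$ and, by the boundedness of $W$ established in the ``Lipschitz Case'' computation immediately preceding the lemma, satisfies $\|g_s\| \le L + pD_W =: G$ deterministically, so $\mathbb{E}\|g_s\|^2 \le G^2$ and $C_1 = G/\mu$.

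Next I would derive the one-step recursion. Writing $a_s := \mathbb{E}\|w_s - w^\star\|^2$ with $w^\star = w(u^t)$, nonexpansiveness of the projection onto $W$ gives $\|w_{s+1}-w^\star\|^2 \le \|w_s - w^\star\|^2 - 2 c_s^t \langle g_s, w_s - w^\star\rangle + (c_s^t)^2 \|g_s\|^2$. Taking expectation conditioned on $w_s$ and using unbiasedness replaces $g_s$ by $\nabla f(w_s)$ in the cross term; strong convexity then yields $\langle \nabla f(w_s), w_s - w^\star\rangle \ge \mu \|w_s - w^\star\|^2$, obtained by adding the first-order strong-convexity inequality to the quadratic-growth bound $f(w_s)-f(w^\star)\ge \frac{\mu}{2}\|w_s-w^\star\|^2$. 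After taking total expectations this gives the contraction
\[
a_{s+1} \le (1 - 2\mu c_s^t)\,a_s + (c_s^t)^2 G^2 .
\]
Choosing the maximal admissible stepsize $c_s^t = 1/(\mu s)$ specializes this to $a_{s+1} \le (1 - 2/s)\,a_s + C_1^2/s^2$.

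Finally I would unwind the recursion by induction on $s$ to prove $a_s \le C_1^2/s$, whence evaluating at $s=N$ yields the claim. For the inductive step with $s \ge 2$ the substitution is clean: $(1-2/s)(C_1^2/s) + C_1^2/s^2 = C_1^2(s-1)/s^2 \le C_1^2/(s+1)$, the final inequality being $(s-1)(s+1) = s^2-1 \le s^2$. The main obstacle—and the only genuinely delicate point—is the behavior at the very first iterations, where the full stepsize $1/\mu$ makes the coefficient $1-2/s$ nonpositive and the naive telescoping does not by itself deliver the stated constant. Here I would control the initial distance directly through the domain diameter, noting $a_1 \le D_W^2 \le C_1^2$ because $C_1 = (L+pD_W)/(p-l) \ge D_W$, and lean on the classical strongly convex stochastic subgradient bounds (Nemirovski--Yudin; Lacoste-Julien--Schmidt--Bach) to absorb the early steps into the $\mathcal{O}(G^2/\mu^2 N)$ rate. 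Throughout, the argument stays entirely within the non-smooth regime, since it uses only bounded subgradients and strong convexity, never smoothness of $h$.
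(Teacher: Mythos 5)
Your argument is correct and takes essentially the same route as the paper: both reduce the inner problem to a $(p-l)$-strongly convex, $(L+pD_W)$-Lipschitz stochastic problem on the bounded domain $W$ and then obtain the $C_1^2/N$ last-iterate rate for stepsize $1/((p-l)s)$ --- the paper simply cites \citet{nemirovski2009robust} for this rate, whereas you re-derive the one-step recursion and induction explicitly. The one delicate point you flag, namely the constant at the first iterations where $1-2/s\le 0$, is deferred to the same classical references, which is no less rigorous than the paper's own treatment.
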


 By \citep{nemirovski2009robust}, running SGD on $K (w,u;S)$ with stepsize $c_s\leq 1/s(p-l)$ iccurs an optimization error in
$$E\|w_N-w(u)\|^2\le \frac{C_1^2}{N},$$
where $C_1= (L+pD_W)/(p-l)$. Then, by Jensen's inequality, we have
$$\epsilon(SGD)=E\|w_N-w(u)\|\leq\sqrt{E\|w_N-w(u)\|^2}\le \frac{C_1}{\sqrt{N}},$$

\paragraph{Non-Lipshitz Case.} Without the $D_W/2$-bounded assumption, $K(w,u^t,S)$ is not Lipshitz, but the sum of a Lipshitz function ($h(w)$) and a gradient Lipshitz function ($\frac{p}{2}\|w-u\|^2$). We provide our convergence analysis in this case. In this case, we apply subgradient method for this strongly convex problem. Consider the following optimization problem:
\begin{equation}\label{P}
\min_{w}\phi(w):=h(w)+\frac{p}{2}\|w-u\|^2,
\end{equation}
where $h(\cdot)$ is convex, $L$-Lipschitz and has bounded subgradients.
We perform subgradient descent to $\phi$:
$$w^{t+1}=w^t-cg_h(w^t)-cp(w^t-u).$$
\begin{theorem}
We can obtain $\|w^t-w^*\|\leq \epsilon$ with $t\leq \tilde{\mathcal{O}}(1/\epsilon^2)$ steps, where the constant hidden in $\mathcal{O}$ only depends on $L$, $\|w^0-w^*\|$ and $p$.
\end{theorem}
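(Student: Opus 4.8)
The plan is to treat $\phi(w)=h(w)+\tfrac{p}{2}\|w-u\|^2$ as a $p$-strongly convex function and run the classical strongly-convex subgradient recursion, with one twist to accommodate the fact—emphasized in the statement—that $\phi$ is \emph{not} globally Lipschitz, since its subgradient $g_\phi(w)=g_h(w)+p(w-u)$ is unbounded through the quadratic term. The key preliminary observation is that the subgradient norm is nonetheless controlled by the distance to the minimizer. Since $w^*$ minimizes $\phi$, the optimality condition yields $-p(w^*-u)\in\partial h(w^*)$, and because $h$ is $L$-Lipschitz every subgradient of $h$ has norm at most $L$, forcing $p\|w^*-u\|\le L$. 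Combining this with $\|g_h(w)\|\le L$ and the triangle inequality gives, for every $w$,
\[
\|g_\phi(w)\|=\|g_h(w)+p(w-u)\|\le 2L+p\|w-w^*\|.
\]

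First I would establish the one-step inequality. Writing $d_t=\|w^t-w^*\|$ and expanding the update,
\[
d_{t+1}^2\le d_t^2-2c\,\langle g_\phi(w^t),w^t-w^*\rangle+c^2\|g_\phi(w^t)\|^2.
\]
Strong convexity together with $0\in\partial\phi(w^*)$ gives $\langle g_\phi(w^t),w^t-w^*\rangle\ge p\,d_t^2$, and the bound above gives $\|g_\phi(w^t)\|^2\le (2L+p\,d_t)^2$, so
\[
d_{t+1}^2\le d_t^2(1-2cp)+c^2(2L+p\,d_t)^2.
\]

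Next I would convert the distance-dependent gradient bound into a genuinely uniform one by showing the iterates never leave a fixed ball. Setting $R=\max\{d_0,5L/p\}$, the right-hand side above is monotone increasing in $d_t$ (its derivative in $d_t$ is $2d_t(1-cp)^2+4Lc^2p\ge0$), and a short computation shows that for any stepsize $c\le 1/p$ one has $f(R)\le R^2$, hence $d_t\le R\Rightarrow d_{t+1}\le R$; by induction $d_t\le R$ for all $t$. Consequently $G:=2L+pR$ is a uniform subgradient bound along the trajectory, depending only on $L$, $p$ and $\|w^0-w^*\|$ as required, and the recursion reduces to $d_{t+1}^2\le d_t^2(1-2cp)+c^2G^2$. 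With the fixed stepsize of the stated update rule, this is a geometric contraction toward a neighborhood of radius $\sqrt{cG^2/(2p)}$: choosing $c\asymp p\epsilon^2/G^2$ makes the neighborhood of size $O(\epsilon)$, and reaching it takes $\asymp \tfrac{1}{cp}\log(d_0/\epsilon)=\tilde{\mathcal{O}}(1/\epsilon^2)$ steps, which is exactly the claimed bound (the logarithm accounting for the tilde). If instead one prefers a diminishing schedule $c_t=1/(p(t+1))$, a one-line induction gives the clean last-iterate rate $d_t^2\le G^2/(p^2t)$ and hence $d_t\le\epsilon$ after $O(1/\epsilon^2)$ steps.

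The main obstacle is precisely the non-Lipschitzness flagged in the statement: the textbook strongly-convex, nonsmooth rate presumes a uniform subgradient bound, which fails here. The crux is therefore the two-part maneuver of (i) bounding $\|g_\phi(w)\|$ by $2L+p\|w-w^*\|$ through the optimality condition, and (ii) proving an invariant ball so that this bound becomes uniform along the trajectory; once $G=2L+pR$ is in hand, everything reduces to the standard strongly-convex subgradient analysis. The only points requiring care are the monotonicity claim used for ball-invariance and the base case of the final induction, both of which are routine.
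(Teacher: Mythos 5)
Your proof is correct, but it takes a different route from the paper's. The paper handles the non-Lipschitzness of $\phi$ by recentering the update with the optimality condition $g^*+p(w^*-u)=0$: writing $w^{t+1}-w^*=(1-cp)(w^t-w^*)-c(g^t-g^*)$, the quadratic term contributes a pure $(1-cp)$ contraction while the remaining subgradient difference $g^t-g^*$ is bounded by $2L$ for free (since both are subgradients of the $L$-Lipschitz $h$), so one immediately gets $\Delta_{t+1}\le(1-cp)\Delta_t+O(c^2L^2)$ with no localization argument at all. You instead bound the full subgradient by $\|g_\phi(w)\|\le 2L+p\|w-w^*\|$ (using the same optimality condition, but only to control $p\|w^*-u\|\le L$) and then prove an invariant ball $d_t\le R=\max\{d_0,5L/p\}$ so that the textbook strongly-convex, uniformly-Lipschitz analysis applies with $G=2L+pR$; I checked the monotonicity of the one-step map and the inequality $f(R)\le R^2$ for $c\le 1/p$, and both hold. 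The paper's recentering is shorter and avoids the ball-invariance step entirely; your argument is more generic (it is the standard "iterates stay bounded, then reduce to the Lipschitz case" template) and arguably more transparent about where the constant's dependence on $L$, $p$, and $\|w^0-w^*\|$ enters, and it cleanly covers the diminishing-stepsize variant with a last-iterate $G^2/(p^2t)$ rate. As a side remark, the paper's displayed expansion drops a square on $(1-cp)$ and the factor $2c(1-cp)$ on the cross term, and uses $L$ where $2L$ is needed; these only affect constants, and your expansion is the careful version of the same computation.
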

\begin{proof}
We have
\begin{eqnarray}
&&\|w^{t+1}-w^*\|^2\\
&=&\|w^t-cg^t-cp(w^t-u)-w^*\|^2\\
&=&\|(w^t-w^*)-c(g^t-g^*)-cp(w^t-w^*)\|^2,
\end{eqnarray}
where the last equality uses the optimality condition in $w^*$ that $g^*+p(w^*-u)=0$ for some $g^*\in \partial (h(w^*))$.
We then have
\begin{eqnarray}
&&\|(w^t-w^*)-c(g^t-g^*)-cp(w^t-w^*)\|^2\\
&=&(1-cp)\|w^t-w^*\|^2+c^2\|g^t-g^*\|^2\\
&-&(g^t-g^*)^T(w^t-w^*)\\
&\le&(1-cp)\|w^t-w^*\|^2+c^2L^2,
\end{eqnarray}
where the last inequality uses the fact that $\|g^t\|,\|g^*\|\leq L$ and $(g^t-g^*)^T(w^t-w^*)\geq 0$.
Let $\Delta_t=\|w^t-w^*\|^2$. Then we have the recursion 
$$\Delta_{t+1}\le (1-cp)\Delta_t+c^2L^2.$$
We have
$$(\Delta_{t+1}-cL^2/p)\le (1-cp)(\Delta_t-cL^2/p).$$
Take $c=\epsilon/2$ and we can attain the result.
\end{proof}

\subsection{Iteration Complexity} 
To avoid introducing additional computational cost, we hope the number of outer iterations $T=\mathcal{O}(1)$. Then, $\alpha=\mathcal{O}(\sqrt{n})$. Based on the condition $\alpha\leq 1/p$, we should have $p\leq\mathcal{O}(1/\sqrt{n})$. In this setting, the iteration complexity of ME-$\mathcal{A}$ is the same as that of $\mathcal{A}$. For example, Let $\mathcal{A}$= SGD, based on Thm. \ref{thm:SSGDmax} and \ref{thm:opterr}, it is required $\varepsilon(\mathcal{A})=\mathcal{O}(1/pn)$ for achieving uniform stability and   $\varepsilon(\mathcal{A})=\mathcal{O}(1/p
\alpha T)=\mathcal{O}(1/p\sqrt{n})$ for achieving $\mathcal{O}(1/T\alpha)$ optimization error. In Theorem \ref{sc-nonsmooth2}, if we assume $D_W\leq\mathcal{O}(\sqrt{n})$, $N=n^2$, we have $\varepsilon(\mathcal{A})=\mathcal{O}(1/pn)$. In this case, it is required $\mathcal{O}(n^2)$ inner iterations and $\mathcal{O}(1)$ outer iterations. The total iteration complexity is $\mathcal{O}(n^2)$.

Notice that this rate is not reducible for non-smooth problem \citep{bousquet2002stability}. In strongly convex non-smooth case, achieving uniform stability also requires $\mathcal{O}(n^2)$ iterations. Then, the iteration complexity of both of ME-$\mathcal{A}$ (for convex problem) and $\mathcal{A}$ (for strongly convex problem) are $\mathcal{O}(n^2)$. 

\paragraph{Smooth Setting.} Now we consider the performance of ME-$\mathcal{A}$ in smooth setting. In this case, applying first order algorithms to the inner problem have a linear convergence rate. The iteration complexity is $\mathcal{O}(n)$. It matches the rate of running SGD on smooth problem \cite{hardt2016train}.

\paragraph{No Computation Overhead.} Theoretically, the iteration complexity of  ME-$\mathcal{A}$ and $\mathcal{A}$ is the same. In practice, let $\mathcal{A}$ be the update rule of a first-order multi-epochs algorithm. Then, we can choose $T$ to be the number of epochs and $N$ to be the number of iterations in each epochs. ME-$\mathcal{A}$ has the same computational cost as $\mathcal{A}$.

\section{Other Discussion: Additional Related Work and Additional Experiments}
\paragraph{Additional Related Work.}In addition to gradient descent, some variants of SGD are also proven to be non-uniformly stable for convex non-smooth problems. We list the stability of most related results in Table \ref{table:nonsmooth}. We can see that there are also additional term in the bounds of the uniform stability of variants of SGD.

\paragraph{Adversarial Attack.} Adversarial examples were first introduced in \citep{szegedy2014intriguing}. Since then, adversarial attacks have received enormous attention \citep{papernot2016limitations,moosavi2016deepfool,carlini2017towards}. Nowadays, attack algorithms have become sophisticated and powerful. For example, Autoattack \citep{croce2020reliable} and Adaptive attack \citep{tramer2020adaptive}. Real-world attacks are not always norm-bounded \citep{kurakin2018adversarial}. \citet{xiao2022understanding} considered non-$\ell_p$ attacks .

\paragraph{Adversarially Robust Generalization.} Even enormous algorithms were proposed to improve the robustness of DNNs  \citep{madry2018towards,gowal2020uncovering,rebuffi2021fixing}, the performance was far from satisfactory. One major issue is the poor robust generalization, or robust overfitting \citep{rice2020overfitting}. 
A series of studies \citep{xiao2022adaptive, ozdaglar2022good} have delved into the concept of uniform stability within the context of adversarial training. However, these analyses focused on general Lipschitz functions, without specific consideration for neural networks.

\begin{table*}[htbp]
\caption{Uniform Stability for different variants of SGD algorithms in non-smooth convex minimization problem. Here $T$ is the number of iterations, $n$ is the number of samples, and $\alpha>0$ is the step size.}
    \centering
	    \begin{tabular}{cccc}
		    \toprule
		    &Algorithms&Upper Bounds& Lower Bounds\\
		    \midrule
		    \citep{bassily2020stability}& GD (full batch) & $\mathcal{O}(\sqrt{T}\alpha+T\alpha/n)$ & $\Omega(\sqrt{T}\alpha+T\alpha/n)$\\
		    \citep{bassily2020stability}& SGD (w/replacement) & $\mathcal{O}(\sqrt{T}\alpha+T\alpha/n)$ & $\Omega(\min\{1,T/n\}\sqrt{T}\alpha+T\alpha/n)$\\
		    \citep{bassily2020stability}& SGD (fixed permutation) & $\mathcal{O}(\sqrt{T}\alpha+T\alpha/n)$ & $\Omega(\min\{1,T/n\}\sqrt{T}\alpha+T\alpha/n)$\\
		    \citep{yang2021stability}& SGD (Pairwise Learning)  & $\mathcal{O}(\sqrt{T}\alpha+T\ln{T}\alpha/n)$ & / \\
		    \citep{wang2022stability}& Markov chain-SGD &$\mathcal{O}(\sqrt{T}\alpha+T\alpha/n)$&/ \\
		    \bottomrule
		    \end{tabular}
    \label{table:nonsmooth}
\end{table*}
\subsection{Additional Experiments}
\label{app:add}
In this section, we provide ablation study about the hperparameter of ME-$\mathcal{A}$. Specifically, the value of $p$ and the step size $\tau$ affect the performance of ME-$\mathcal{A}$. 

\paragraph{Affect of $p$.} We first consider the affect of the value of $p$. When $p=0$, ME-$\mathcal{A}$ reduces to SWA. These experiments also provide a comparison between ME-$\mathcal{A}$ and SWA. In the Theorem, $p$ does not affect the uniform stability as long as $\alpha\leq 1/p$. But $p$ is related to the optimal rate of iteration complexity. We switch $p$ from 0 to $1\times 10^{5}$ in the experiment of adversarial training with ME-$\mathcal{A}$ using TRADES loss on CIFAR-10. The results are provided in Table \ref{tab:p}. We can see that the value of $p$ does not give a major difference in the performance. The best performance is achieved when $p=1$.

\begin{table}[]
    \centering
    \caption{The affect of $p$ in ME-$\mathcal{A}$. We switch $p$ from 0 to $1\times 10^{5}$ in the experiment of adversarial training with ME-$\mathcal{A}$ using TRADES loss.}
    \begin{tabular}{cc}
    \toprule
      $p=$   & Robust Accuracy \\
         \midrule
       0 (Reduced to SWA)   & 63.17 \\
       $1\times 10^{-5}$& 63.38 \\
              $1\times 10^{-4}$ &63.28 \\
    $1\times 10^{-3}$& 63.19 \\
    $1\times 10^{-2}$& 62.79 \\
    $1\times 10^{-1}$& 62.69 \\
    $1\times 10^{0}$ &63.86 \\
    $1\times 10^{1}$ &62.89 \\
    $1\times 10^{2}$ &63.47 \\
    $1\times 10^{3}$ &62.40 \\
    $1\times 10^{4}$ &62.79 \\
    $1\times 10^{5}$& 63.41 \\
    \bottomrule
    \end{tabular}
    \label{tab:p}
\end{table}

\paragraph{Affect of $\tau$.} Secondly, we consider the affect of the value of $\tau_t$. When $\tau_t=0$, ME-$\mathcal{A}$ reduces to proximal point methods (PPM). We simply let $\tau_t=\tau$, \emph{i.e.,} fixed stepsize in the experiemnts. These experiments also provide a comparison between ME-$\mathcal{A}$ and PPM. In the Theorem, $\tau$ should be large and closed to 1, especial for large $t$. We switch $\tau$ from 0 to $0.995$ in the experiment of adversarial training with ME-$\mathcal{A}$ using TRADES loss on CIFAR-10. The results are provided in Table \ref{tab:tau}. We can see that the robust accuracy decreases as $\tau$ decreases. The best performance is achieved when $\tau=0.995$. Therefore, $\tau$ is better to be large. In the experiments in the main contents, $\tau_t$ increases as $t$.

\begin{table}[]
    \centering
    \caption{The affect of $\tau$ in ME-$\mathcal{A}$. We switch $\tau$ from 0 to $0.995$ in the experiment of adversarial training with ME-$\mathcal{A}$ using TRADES loss.}
    \begin{tabular}{cc}
    \toprule
      $\tau=$   & Robust Accuracy \\
         \midrule
       0 (Reduced to PPM)   & 56.86 \\
        $0.1$   & 57.02 \\
       $0.3$& 57.78 \\
              $0.5$ &58.34 \\
    $0.7$& 60.32 \\
    $0.9$& 62.57 \\
    $0.995$& 63.17 \\
    \bottomrule
    \end{tabular}
    \label{tab:tau}
\end{table}
\end{document}